\IfFileExists{arxiv/arxiv.tex}{\def\versionroot{arxiv/}}{\def\versionroot{}}
\makeatletter
\edef\input@path{{\versionroot}}
\makeatother
\PassOptionsToPackage{pdftex,table}{xcolor}
\documentclass[11pt,a4paper]{gdm_format}
\usepackage[authoryear,sort&compress,round]{natbib}

\usepackage{amsmath,amsfonts,bm}

\def\eqref#1{equation~\ref{#1}}

\def\1{\bm{1}}

\DeclareMathAlphabet{\mathsfit}{\encodingdefault}{\sfdefault}{m}{sl}
\SetMathAlphabet{\mathsfit}{bold}{\encodingdefault}{\sfdefault}{bx}{n}

\renewcommand{\eqref}[1]{\textup{(\ref{#1})}}

\usepackage{tikz}
\usepackage{wrapfig}
\usepackage{algorithm}
\usepackage{algpseudocode}
\usepackage{placeins}
\usepackage{float}
\usepackage{eso-pic}
\definecolor{rscblue}{RGB}{229,241,255}
\hypersetup{
  colorlinks=true,
  citecolor={black!60},
  linkcolor={blue!50!black},
  urlcolor={blue!50!black},
  pdftitle={Neural Spectral Capacity with Residual Context for Heterogeneous Resource Allocation within Transformers},
  pdfauthor={Chenyu Zhu, Ruoyu Zhao, Zhichao Lu}
}

\newtheorem{theorem}{Theorem}

\newtheorem{proposition}{Proposition}

\titlespacing*{\paragraph}{0pt}{\parskip}{1em}

\newcommand{\TitlePageLogo}{%
  \AddToShipoutPictureBG*{%
    \AtPageUpperLeft{%
      \put(\dimexpr 0.75in\relax,-\dimexpr 0.8in\relax){%
        \includegraphics[width=5cm]{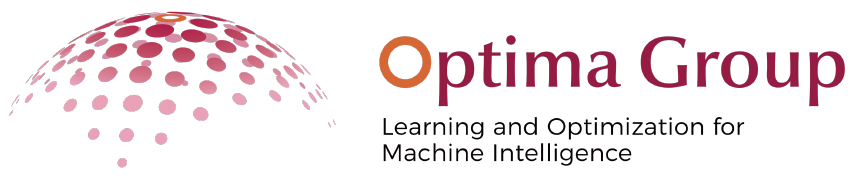}%
      }%
    }%
  }%
}

\title{Neural Spectral Capacity with\\Residual Context for Heterogeneous\\Resource Allocation within Transformers}
\correspondingauthor{Zhichao Lu (zhichao.lu@cityu.edu.hk)}
\author[ ]{Chenyu Zhu}
\author[ ]{Ruoyu Zhao}
\author[ ]{Zhichao Lu}
\affil[ ]{Department of Computer Science, City University of Hong Kong}
\renewcommand{\sectionmark}[1]{}
\renewcommand{\subsectionmark}[1]{}
\renewcommand{\bibsection}{\section*{\refname}}


\graphicspath{{\versionroot}}
\begin{document}
\raggedbottom
\TitlePageLogo
\begin{abstract}
Predicting relative trained performance from architectural specifications can guide design when training costs limit experimentation.
Neural Spectral Capacity (NSC) provides such a heuristic, but scores projections independently and assigns identical scores to compatible reorderings of the same layer configurations.
We introduce Residual-Aware Neural Spectral Capacity (ResNSC) for Transformers, using a specification-derived statistical reference for accumulated residual updates to determine the input and residual-state statistics in each projection's capacity calculation, while retaining additive evaluation without data or instantiated weights.
When residual widths vary, earlier choices affect later scores, so we augment NSC's allocator (NSC-DP) with residual history beyond its layer-and-budget state.
ResNSC-DP jointly allocates residual and feed-forward widths under a resource budget, with certified bounds on the best numerically evaluated ResNSC score within a specified finite candidate set.
At the 121M and 374M pretraining scales, selected architectures achieve lower mean perplexity than uniform, variable-width, and tapered allocations on FineWeb-Edu, WikiText-2, and LAMBADA at comparable parameter budgets.
In inherited-weight LLaMA-7B pruning using the LoNAS supernet, ResNSC-DP improves eight-task average accuracy over NSC by 3.28--5.46 percentage points across eight matched parameter budgets.
Paired reversals preserve the projection shapes and NSC scores yet worsen performance in both settings, exposing performance differences hidden by position-independent scoring.
The code for this work is available at \url{https://github.com/Optima-CityU/neural-spectral-capacity}.

\end{abstract}
\maketitle
\section{Introduction}
\label{sec:intro}

Predicting a neural network's performance before training can guide architecture development, especially when foundation-model training costs limit experimentation. Empirical scaling laws, calibrated on training runs, forecast loss from model and data scale and guide compute allocation \citep{kaplan2020scaling,hoffmann2022chinchilla}. However, these aggregate quantities leave structural differences at a given scale unresolved: equally sized networks can differ in depth, projection dimensions, and arrangement. Neural Spectral Capacity (NSC; \citealp{zhu2026nsc}) provides a heuristic for predicting relative trained performance: it analytically computes and sums projection-level scores from matrix dimensions and prescribed initialization statistics, without data or instantiated weights. Because NSC evaluates each projection independently of its position, it assigns identical scores to compatible reorderings of the same layer configurations. Recent studies show that distributing capacity differently across depth can change language-model performance at matched parameter budgets \citep{wu2026variablewidth,bayat2026tapered}. These observations motivate incorporating architectural context into specification-only scoring.

\begin{figure}[t]
\centering
\input{figures/RSC_Figure1_Overview.tex}
\caption{From NSC to ResNSC. Left: schematic ResNSC-DP allocation (A) and reversal (B), with mean 374M FineWeb-Edu perplexities. Reversal preserves projection shapes and NSC scores but changes residual context. Right: additive scoring views of block $\ell$. ResNSC's specification-derived covariances use residual history and, for output projections, within-branch propagation. For $b=2\ell-2$, $p_b,p_{b+1},p_{b+2}$ are the pre-attention, post-attention, and post-FFN reference variances. Blue/orange: attention/FFN; normalization omitted.}
\label{fig:rsc-overview}
\end{figure}

The residual stream provides a concrete basis for modeling this context: in a pre-normalized Transformer, attention and feed-forward network (FFN) branches read a normalized view of the shared state and add their outputs back to it \citep{elhage2021framework,xiong2020layernorm}. Earlier branches therefore shape both the features supplied to later projections and the existing state receiving branch outputs, so identically shaped projections can operate under different conditions. In the variable-width setting considered here, branches operate on a width-dependent subset of the shared residual state, so preceding width choices determine which coordinates have been updated and how often. This raises the question: \emph{how can NSC account for this residual context while remaining computable from architecture specifications alone?}

We introduce \emph{Residual-Aware Neural Spectral Capacity} (ResNSC), which uses a specification-derived statistical reference for accumulated residual updates to determine the input and residual-state statistics in each projection's capacity calculation. For output projections, this input model includes earlier within-branch transformations (Figure~\ref{fig:rsc-overview}, right). ResNSC can therefore assign different scores to reorderings of the same projections, while retaining NSC's additive aggregation and deterministic evaluation without data or instantiated weights. When residual widths vary, ResNSC's layer scores depend on preceding choices, so the original layer-and-budget state of NSC's allocator (NSC-DP) no longer suffices. ResNSC-DP augments that state with residual history to jointly allocate residual and FFN widths under a resource budget.

At the 121M and 374M pretraining scales, architectures selected by ResNSC-DP achieve lower mean perplexity than uniform, variable-width (VWT) \citep{wu2026variablewidth}, and tapered (TLM) \citep{bayat2026tapered} allocations at comparable parameter budgets. Relative to the strongest of these baselines for each dataset and scale, the reductions are 0.3--0.4\% on FineWeb-Edu, 2.3--2.5\% on WikiText-2, and 19.6--24.9\% on LAMBADA. In LLaMA-7B pruning using the LoNAS supernet \citep{munoz2024lonas}, ResNSC-DP improves average accuracy over eight tasks by 3.28--5.46 percentage points relative to NSC across eight matched parameter budgets. In paired controls, reversing the selected width allocations preserves the collection of projection shapes and the NSC score, yet worsens performance in both settings (Figure~\ref{fig:rsc-overview}, left, illustrates the 374M case).

Our contributions are:
\begin{itemize}[leftmargin=*,itemsep=0.35em,topsep=0.35em]
\interlinepenalty=10000
\item \textbf{Residual context in capacity evaluation.} A specification-derived residual reference for evaluating Transformer projections in their architectural context.
\item \textbf{Sufficient history for joint allocation.} Under our reference, counts of preceding layers at each residual width suffice for future score evaluation, enabling ResNSC-DP with bounds on the best ResNSC score within a specified finite candidate set.
\item \textbf{Controlled evidence on layer order.} Paired reversals reveal performance differences between allocations with identical NSC scores, in both pretraining and inherited-weight pruning.
\end{itemize}

\section{Related Work}
\label{sec:related}

\paragraph{Training-free architecture evaluation.}
Training-free architecture evaluation estimates candidate quality without fully training each model. Representative methods use activation diversity or gradient statistics at initialization \citep{abdelfattah2021zerocost,mellor2021naswot}; W-PCA combines parameter information with principal-component statistics of intermediate activations \citep{wang2025wpca}. These methods evaluate instantiated models using input samples. NSC computes a deterministic score from projection dimensions and prescribed initialization statistics, evaluating each projection independently of its position \citep{zhu2026nsc}. ResNSC retains this specification-only evaluation while conditioning projection-level scores on residual context.

\paragraph{Residual context in Transformers.}
The Transformer Circuits framework describes Transformer computation through branches that read and update a shared residual representation \citep{elhage2021framework}. The behavior of this shared state depends on normalization, studied through signal and gradient propagation \citep{xiong2020layernorm}, and residual mixing, which Hyper-Connections modifies through learned connections \citep{zhu2025hyperconnections}. ResNSC uses this structural perspective to construct a specification-derived reference for evaluating projections. Its statistical assumptions are modeling choices made for architecture scoring, specified in Section~\ref{sec:ledger}.

\paragraph{Layer-wise heterogeneous Transformer architectures.}
Architecture search explores layer-specific Transformer configurations, as in AutoFormer \citep{chen2021autoformer} and FlexiBERT \citep{tuli2022flexibert}. VWT and TLM prescribe depth-dependent profiles for whole-block and FFN widths, respectively \citep{wu2026variablewidth,bayat2026tapered}, while geometry-guided allocation uses forward-pass measurements of representation change \citep{mudarisov2026geometry}. Structured pruning also makes model-dependent choices, using gradient-based importance in LLM-Pruner \citep{ma2023llmpruner} or learning pruning configurations in Sheared LLaMA \citep{xia2024sheared}. ResNSC supplies a specification-derived objective for width selection in both architecture design and inherited-weight subnetwork selection. In LoNAS \citep{munoz2024lonas}, ResNSC selects FFN widths while the elastic supernet supplies the selected architecture's weights. When residual widths vary, earlier choices affect later scores, so ResNSC-DP extends NSC's budget-constrained allocation with residual history.

\section{Residual-Aware Neural Spectral Capacity}
\label{sec:method}

ResNSC extends NSC's local capacity framework with residual history and within-branch propagation. We construct reference covariances from specifications for deterministic additive scoring, then allocate widths with history-aware ResNSC-DP under a resource budget.

\subsection{Preliminaries: local capacity and NSC}
\label{sec:capacity}

\paragraph{Local capacity.}
Consider a linear transformation with weight matrix $W\in\mathbb R^{m\times n}$. Conditional on $W$, we define the Gaussian reference channel $y=Wx+z$, where $x\sim\mathcal N(0,\Sigma_x)$ and $z\sim\mathcal N(0,\Sigma_z)$ are independent, with $\Sigma_x\succeq0$ and $\Sigma_z\succ0$. The input covariance $\Sigma_x$ describes the variances and correlations of input features, while $\Sigma_z$ describes the reference noise covariance. The Gaussian mutual-information identity gives \citep{cover2006elements,telatar1999capacity}
\begin{equation}
 \psi(W;\Sigma_x,\Sigma_z):=2I(x;y\mid W)
 =\ln\det\!\left(I_n+\Sigma_x^{1/2}W^\top\Sigma_z^{-1}W\Sigma_x^{1/2}\right).
 \label{eq:localcapacity}
\end{equation}
We use natural logarithms and prescribed reference covariances; Appendix~\ref{app:channel} gives the derivation.

\paragraph{NSC's specification-only reference.}
NSC adopts isotropic inputs and unit noise, $\Sigma_x=I_n$ and $\Sigma_z=I_m$, giving its local spectral-capacity expression $\psi(W;I_n,I_m)=\psi(W)=\ln\det(I_n+W^\top W)$ \citep{zhu2026nsc}.
For specification-only scoring, NSC evaluates this expression through a deterministic spectral formula for random-matrix ensembles specified by projection dimensions and initialization variance conventions. This yields local scores without data or instantiated candidate weights. The reference conditions do not encode projection position or residual history. ResNSC retains Equation~\eqref{eq:localcapacity} while constructing the input and noise covariances from architectural context.

\subsection{Residual reference from architectural specifications}
\label{sec:ledger}

\paragraph{Architecture and reference assumptions.}
Consider $L$ pre-normalized Transformer blocks, each containing an attention branch followed by a feed-forward network (FFN) branch. Both branches in block $\ell$ read a normalized view of the first $d_\ell$ residual coordinates and add their outputs to those coordinates; the remaining coordinates are preserved, as in VWT \citep{wu2026variablewidth}. The residual state has width $M=\max(E,d_1,\ldots,d_L)$, where $E$ is the embedding width. In our statistical reference, embedding coordinates are centered, uncorrelated, and have unit variance, with zeros elsewhere. Each branch adds a centered update with identity covariance on its active coordinates, uncorrelated with the initial state and other updates.

\begin{wrapfigure}{r}{0.46\textwidth}
\centering
\vspace{-1em}
\begingroup
\definecolor{resnscHistoryText}{HTML}{30363B}
\definecolor{resnscHistoryLine}{HTML}{59656F}
\definecolor{resnscHistoryInitial}{HTML}{F3F4F5}
\definecolor{resnscHistoryBlue}{HTML}{D3E5F3}
\definecolor{resnscHistoryBlueLine}{HTML}{5B9CCA}
\definecolor{resnscHistoryOrange}{HTML}{FFE3A8}
\definecolor{resnscHistoryOrangeLine}{HTML}{CB993A}
\definecolor{resnscHistoryGreen}{HTML}{CFE9DC}
\definecolor{resnscHistoryGreenLine}{HTML}{559D83}
\newcommand{\resnscLedgerCell}[5]{%
  \path[fill=#4,draw=#5,line width=0.3pt]
    (#1,{9*#2-27}) rectangle ++(16,9);
  \node[inner sep=0pt] at ({#1+8},{9*#2-22.5}) {#3};%
}
\begin{tikzpicture}[
  x=1pt,y=1pt,
  font=\sffamily\fontsize{8.5}{9.5}\selectfont,
  text=resnscHistoryText,
  line cap=round,line join=round
]
  \draw[->,resnscHistoryLine,line width=0.65pt] (16,0) -- (169,0);
  \node[font=\sffamily\fontsize{7}{8}\selectfont,inner sep=0pt]
    at (31,-10) {identity};

  \foreach \offset/\width/\block/\fillcolor/\edgecolor in {
    0/6/1/resnscHistoryBlue/resnscHistoryBlueLine,
    48/4/2/resnscHistoryOrange/resnscHistoryOrangeLine,
    96/2/3/resnscHistoryGreen/resnscHistoryGreenLine} {
    \pgfmathsetmacro{\branchmid}{38+2*\width}
    \fill[resnscHistoryLine] ({\offset+19},0) circle (1pt);
    \draw[->,resnscHistoryLine,line width=0.5pt]
      ({\offset+19},0) -- ({\offset+19},\branchmid)
      -- ({\offset+25},\branchmid);
    \foreach \row in {1,...,\width} {
      \path[fill=\fillcolor,draw=\edgecolor,line width=0.3pt]
        ({\offset+25},{38+4*(\row-1)}) rectangle ++(12,4);
    }
    \draw[->,resnscHistoryLine,line width=0.5pt]
      ({\offset+37},\branchmid) -- ({\offset+43},\branchmid)
      -- ({\offset+43},3.4);
    \node[anchor=east,inner sep=0pt] at ({\offset+22},{\branchmid+5.5}) {$d_{\block}=\width$};
    \node[inner sep=0pt] at ({\offset+36},31.5) {$+2$};
    \node[draw=resnscHistoryLine,fill=white,circle,line width=0.6pt,
      minimum size=6.8pt,inner sep=0pt,font=\fontsize{9}{9}\selectfont]
      at ({\offset+43},0) {$+$};
  }

  \foreach \row in {0,...,5} {
    \resnscLedgerCell{0}{\row}{1}{resnscHistoryInitial}{resnscHistoryLine}
    \resnscLedgerCell{48}{\row}{3}{resnscHistoryBlue}{resnscHistoryBlueLine}
  }
  \foreach \row in {0,...,3} {
    \resnscLedgerCell{96}{\row}{5}{resnscHistoryOrange}{resnscHistoryOrangeLine}
  }
  \foreach \row in {4,5} {
    \resnscLedgerCell{96}{\row}{3}{resnscHistoryBlue}{resnscHistoryBlueLine}
    \resnscLedgerCell{144}{\row}{3}{resnscHistoryBlue}{resnscHistoryBlueLine}
  }
  \foreach \row in {0,1} {
    \resnscLedgerCell{144}{\row}{7}{resnscHistoryGreen}{resnscHistoryGreenLine}
  }
  \foreach \row in {2,3} {
    \resnscLedgerCell{144}{\row}{5}{resnscHistoryOrange}{resnscHistoryOrangeLine}
  }
  \foreach \x in {0,48,96,144} {
    \draw[resnscHistoryLine,line width=0.45pt]
      (\x,-27) rectangle ++(16,54);
  }
  \foreach \x/\branch in {8/0,56/2,104/4,152/6} {
    \node[inner sep=0pt] at (\x,-35) {$p_{\branch}$};
  }
  \node[inner sep=0pt] at (-7,34) {$i$};
  \node[inner sep=0pt] at (-7,22.5) {6};
  \node[inner sep=0pt] at (-7,-22.5) {1};
\end{tikzpicture}
\endgroup
\caption{Illustrative reference variances ($E=M=6$). Each block adds $2$ (attention, then FFN) on its first $d_\ell$ coordinates; colors mark the last updating block.}
\label{fig:rsc-residual-history}
\vspace{-2em}
\end{wrapfigure}
\paragraph{Accumulated residual variance.}
Index the attention and FFN branches of block $\ell$ by $b=2\ell-2$ and $b=2\ell-1$, respectively. Denote their active widths by $\widetilde d_{2\ell-2}=\widetilde d_{2\ell-1}=d_\ell$. The residual ledger $p_b$ records the reference variance of each coordinate before branch $b$. The ledger is an auxiliary residual reference, with unit increments prescribed separately from the projection ensembles used to evaluate local capacity. Uncorrelated updates add their variances, giving
\begin{equation}
 \begin{aligned}
 p_{0,i}&=\mathbf1\{i\le E\},\quad i=1,\ldots,M,\\
 p_{b+1,i}&=p_{b,i}+\mathbf1\{i\le \widetilde d_b\}.
 \end{aligned}
 \label{eq:ledger}
\end{equation}
Each block contributes two updates, with the FFN evaluated after the attention update (Figure~\ref{fig:rsc-residual-history}). The ledger depends on preceding residual widths; FFN intermediate widths affect projection scores and resource costs but not these variance increments. Appendix~\ref{app:residual-process} gives an auxiliary process realizing this reference.

\paragraph{Reference covariances for scoring.}
Let $\bar p_b=\widetilde d_b^{-1}\sum_{i=1}^{\widetilde d_b}p_{b,i}$. Under a mean-field approximation, we replace the sample root-mean-square scale by $\sqrt{\bar p_b}$, giving normalized input variances $p_{b,i}/\bar p_b$. Read projections transform normalized residual features into branch features; write projections produce outputs added to the residual state. Normalization removes the overall variance scale while preserving relative coordinate variances. To retain sensitivity to the accumulated reference variance scale in capacity evaluation, we additionally adopt inverse-background weighting, with coordinate-wise background $1+p_{b,i}$. This evaluates each projection relative to the accumulated residual background, so the same normalized signal receives less weight when that background is larger. We divide normalized input variances by this background for read scoring and use it as the reference noise variance for write scoring. With a shared scaling constant $Z_L$, the read input and write noise covariances are
\begin{equation}
 \Sigma_{x,b}^{\mathrm{read}}=Z_L\operatorname{diag}_{i\le \widetilde d_b}\!\left(\frac{p_{b,i}}{\bar p_b(1+p_{b,i})}\right),\qquad
 \Sigma_{z,b}^{\mathrm{write}}=Z_L^{-1}\operatorname{diag}_{i\le \widetilde d_b}(1+p_{b,i}).
 \label{eq:metrics}
\end{equation}
The unit term retains NSC's unit-noise reference. When $\bar p_b=0$, we set the normalized input covariance and $\Sigma_{x,b}^{\mathrm{read}}$ to zero. The constant $Z_L=2L/\sum_{b=0}^{2L-1}(b+2)^{-1}$ is shared by all candidates of depth $L$ and normalizes the mean full-width reference precision (inverse noise covariance) to one (Appendix~\ref{app:reference}). When every branch has width $E$, normalized inputs are isotropic and both $\Sigma_{x,b}^{\mathrm{read}}$ and $(\Sigma_{z,b}^{\mathrm{write}})^{-1}$ equal $Z_L/(b+2)$ times the identity. Fixed residual width makes the reference history depth-determined. When widths vary, different preceding choices can produce different coordinate-wise contexts for the same later layer.

\subsection{Projection evaluation and architecture scoring}
\label{sec:projections}

\paragraph{Read and Write Projections.}
The query, key, value, gate, and up projections use $\Sigma_x=\Sigma_{x,b}^{\mathrm{read}}$ and unit noise $\Sigma_z=I$. For each of Q, K, and V, we evaluate the stacked heads as joint observations of their shared source, retaining the per-head entry variances; Appendix~\ref{app:projections} explains the distinction from separate-head evaluation. The attention output and FFN down projections receive features transformed within their branches. We model the upstream transformations by a prescribed random propagation matrix $W_b^{\mathrm{in}}$, independent of the output matrix being scored. The attention reference is a $d_\ell\times d_\ell$ Gaussian matrix for value features; the FFN reference combines an $f_\ell\times d_\ell$ Gaussian matrix with an independent SiLU gate, where $f_\ell$ is the FFN intermediate width. For a nonzero normalized source, these reference ensembles are scaled to give unit feature variance on average. Appendix~\ref{app:projections} specifies their distributions and scaling. Conditional on $W_b^{\mathrm{in}}$, propagating the normalized residual variances gives the input covariance
\begin{equation}
 \Sigma_{x,b}^{\mathrm{write}}
 =W_b^{\mathrm{in}}\operatorname{diag}_{i\le \widetilde d_b}\!\left(\frac{p_{b,i}}{\bar p_b}\right)
   (W_b^{\mathrm{in}})^\top,
 \label{eq:latent}
\end{equation}
with $\Sigma_{x,b}^{\mathrm{write}}=0$ when $\bar p_b=0$. We pair this input covariance with the reference noise covariance $\Sigma_{z,b}^{\mathrm{write}}$. Propagation uses the normalized variances without the additional discount applied in read scoring. Table~\ref{tab:projection-contexts} summarizes the assignments for all seven projections.

\begin{wraptable}{r}{0.46\textwidth}
 \vspace{-2em}
 \centering
 \caption{Reference contexts for Equation~\eqref{eq:localcapacity}: $d=d_\ell$, $f=f_\ell$. Each row uses the ledger immediately before branch $b$.}
 \label{tab:projection-contexts}
 \small
 \setlength{\tabcolsep}{3pt}
 \renewcommand{\arraystretch}{1.1}
 \begin{tabular}{@{}lllll@{}}
 \toprule
 \shortstack[l]{Branch\\index $b$} & \shortstack[l]{Projection\\$j$} & \shortstack[l]{Shape\\$W_{\ell,j}$} & \shortstack[l]{Input\\$\Sigma_x$} & \shortstack[l]{Noise\\$\Sigma_z$}\\
 \midrule
 $2\ell-2$ & Q, K, V & $d\times d$ & $\Sigma_{x,b}^{\mathrm{read}}$ & $I_d$\\
 $2\ell-2$ & O & $d\times d$ & $\Sigma_{x,b}^{\mathrm{write}}$ & $\Sigma_{z,b}^{\mathrm{write}}$\\
 $2\ell-1$ & gate, up & $f\times d$ & $\Sigma_{x,b}^{\mathrm{read}}$ & $I_f$\\
 $2\ell-1$ & down & $d\times f$ & $\Sigma_{x,b}^{\mathrm{write}}$ & $\Sigma_{z,b}^{\mathrm{write}}$\\
 \bottomrule
 \end{tabular}
\end{wraptable}
\paragraph{Deterministic local evaluation.}
For each projection, the reference capacity is the expectation of the local functional in Equation~\eqref{eq:localcapacity} over its prescribed matrix ensemble. For write terms, this expectation includes both the output matrix and the upstream reference, retaining the propagated covariance inside the log-determinant. Read and write expectations are evaluated using covariance-ensemble and product-ensemble spectral formulas, respectively, which reduce the calculation to deterministic scalar fixed-point equations. These formulas approximate the finite-dimensional ensemble expectations from architectural dimensions, residual history, and the prescribed reference conventions, without data or instantiated weights (Appendix~\ref{app:spectral}; \citealp{hoydis2011iterative}). Let $\widehat\psi_{\ell,j}(p_b)$ denote the numerical score for projection $j$ in block $\ell$ with context $p_b$.

\paragraph{Architecture-level aggregation.}
For an architecture $a=((d_1,f_1),\ldots,(d_L,f_L))$ and projection set $\mathcal M=\{\mathrm Q,\mathrm K,\mathrm V,\mathrm O,\mathrm{gate},\mathrm{up},\mathrm{down}\}$, we define the ResNSC score by summing these numerical local scores:
\begin{equation}
 S(a)
 =\sum_{\ell=1}^{L}\sum_{j\in\mathcal M}
       \widehat\psi_{\ell,j}\!\left(p_{b(\ell,j)}\right)
 =\sum_{\ell=1}^{L}S_L^{\mathrm{block}}\!\left(p_{2\ell-2};d_\ell,f_\ell\right).
 \label{eq:architecturecapacity}
\end{equation}
Here $b(\ell,j)$ is given in Table~\ref{tab:projection-contexts}. The block score $S_L^{\mathrm{block}}(p;d,f)$ sums the seven projection terms for incoming ledger $p$, residual width $d$, and FFN width $f$, updating the ledger after attention before evaluating the FFN. Fixed embedding width, head count, and reference conventions are suppressed in this notation. Embedding support initializes the ledger; embeddings, normalization, and readout count toward the resource budget but not the seven projection terms. This additive architecture score can distinguish block reorderings because each term uses its own intermediate residual context, even when the final ledger agrees. When residual widths vary, preceding width choices affect subsequent block scores; the next subsection develops an allocator that retains the required history.

\subsection{Budget-constrained allocation with residual history}
\label{sec:search}
\label{sec:sufficiency}

\paragraph{Formulation.}
We jointly select residual and FFN widths to maximize the numerical ResNSC score under a resource budget. Let $\mathcal G$ be a finite set of specifications. Each $g$ fixes the depth $L_g$, embedding width $E_g$, head count, reference conventions, a finite menu $\mathcal A_g$ of admissible $(d,f)$ pairs, and budget $B_g$. We solve
\begin{equation}
 \max_{\substack{g\in\mathcal G\,,\ a\in\mathcal A_g^{L_g}}}S_g(a)
 \quad\text{subject to}\quad
 c_{\mathrm{fixed}}(g)+\sum_{\ell=1}^{L_g}c_{g,\ell}(d_\ell,f_\ell)\le B_g,
 \label{eq:searchproblem}
\end{equation}
where $S_g$ is the score from Section~\ref{sec:projections} under specification $g$, and the resource cost comprises a fixed contribution and nonnegative additive block costs. Our experiments use integer parameter counts, including embedding/readout interfaces and normalization.

\paragraph{Representing residual history.}
For a fixed $g$ with variable residual widths, two partial architectures (prefixes) at the same depth and spent cost can have different residual ledgers and therefore different scores for the same continuation. The layer-and-budget state used by NSC-DP is therefore insufficient for this history-dependent objective. ResNSC-DP additionally records the counts $\mathbf n=(n_1,\ldots,n_K)$ of preceding blocks at each residual width $d^{(1)}<\cdots<d^{(K)}$. Under the two-update reference, these counts reconstruct the ledger:
\begin{equation}
 p_{g,i}(\mathbf n)=\mathbf1\{i\le E_g\}
       +2\sum_{r=1}^{K}n_r\mathbf1\{i\le d^{(r)}\},
 \qquad t=\sum_{r=1}^{K}n_r.
 \label{eq:histogram}
\end{equation}
The counts determine both the current ledger and the next block index. Appending the same choice gives the same block score and next count vector, so equal counts imply identical additional scores for every common continuation. Under the stated menus and additive resource constraint, feasible continuations also agree at the same remaining budget. We store each prefix as $\pi=(g,\mathbf n,c,s)$, where $c$ is spent variable cost and $s$ is accumulated score. Within the same $(g,\mathbf n)$, a prefix with no greater cost and no smaller score dominates another. The counts retain the history needed for future evaluation, while $s$ retains the effects of past order. FFN choices enter through $c$ and $s$ (Appendix~\ref{app:history-proof}).

\paragraph{Search with completion bounds.}
ResNSC-DP uses backward dynamic programming to construct completion bounds: upper bounds on the total score of any feasible architecture extending a prefix. These bounds guide a search over nondominated prefixes. For each nonnegative resource price $\lambda$, the backward DP maximizes additional score minus $\lambda$ times additional cost, retaining history transitions while relaxing the hard budget. Adding the earned score $s$ and $\lambda$ times the remaining budget $B_g-c_{\mathrm{fixed}}(g)-c$ gives an upper bound on the score of any feasible completion. The tightest bound across the prescribed prices defines $U(\pi)$. Initialization tests minimum-cost architectures to find an initial feasible solution or establish infeasibility. Algorithm~\ref{alg:nsc-cert} expands the prefix with largest $U$ and discards infeasible or dominated prefixes and those whose bounds cannot improve the best feasible complete score found so far. Appendix~\ref{app:search} gives the bound recurrence and detailed procedure (Algorithm~\ref{alg:nsc-cert-full}).

\paragraph{Optimality guarantees.}
Let $S^*$ be the optimum of the numerical objective in Equation~\eqref{eq:searchproblem}. For a nonempty feasible set, under exact arithmetic for the fixed numerical objective and with admissible completion bounds, the best feasible architecture found so far supplies $\mathrm{LB}=S_{\widehat g}(\widehat a)$. At initialization and after each complete prefix expansion, the maximum of $\mathrm{LB}$ and the completion bounds of all remaining valid prefixes supplies $\mathrm{UB}$, with
\begin{equation}
 \mathrm{LB}\le S^*\le\mathrm{UB},\qquad
 \mathrm{UB}-\mathrm{LB}\le\epsilon
 \ \Longrightarrow\ S^*-\mathrm{LB}\le\epsilon.
 \label{eq:guarantee}
\end{equation}
A gap of at most $\epsilon$ establishes $\epsilon$-optimality, and exhausting the valid queue establishes optimality. If search stops earlier, the bounds quantify the remaining objective gap. Appendix~\ref{app:search} gives the proof and implementation details. With fixed residual width, depth alone determines the ledger. Precomputing block scores by layer and FFN width recovers ordinary layer-and-budget dynamic programming with ResNSC's contextual score (Appendix~\ref{app:fixed-width}).

\begin{algorithm}[t]
\definecolor{resnscAlgoContext}{HTML}{9A354A}
\newcommand{\resnscAlgHighlight}[1]{\textcolor{resnscAlgoContext}{#1}}
\caption{ResNSC-DP: allocation with residual history.}
\label{alg:nsc-cert}
\small
\algrenewcommand\algorithmicrequire{\textbf{Input:}}
\algrenewcommand\algorithmicensure{\textbf{Output:}}
\begin{algorithmic}[1]
\Require Specifications $\mathcal G$, menus $\mathcal A_g$, budgets $B_g$, tolerance $\epsilon\ge0$
\Ensure Feasible architecture and objective bounds, or infeasibility
\Statex $\mathcal Q$: valid live prefixes; $\mathbf e_{r(d)}$: width-$d$ count increment; \resnscAlgHighlight{burgundy}: residual-history operations.
\State Construct completion bounds $U$ by \resnscAlgHighlight{history-indexed backward DP} (Appendix~\ref{app:resource-bound}).
\State Initialize feasible roots $\mathcal Q$ and an incumbent with score $\mathrm{LB}$; return infeasible if none exists.
\While{$\exists\pi\in\mathcal Q:U(\pi)>\mathrm{LB}+\epsilon$ and search limits allow}
 \State Remove $\pi=(g,\resnscAlgHighlight{\mathbf n},c,s)$ with largest $U(\pi)$ from $\mathcal Q$.
 \State Reconstruct \resnscAlgHighlight{$p\gets p_g(\mathbf n)$} and $t\gets\sum_r n_r$.
 \For{each $(d,f)\in\mathcal A_g$}
  \State $s'\gets s+\resnscAlgHighlight{S_{L_g}^{\mathrm{block}}(p;d,f)}$.
  \State \resnscAlgHighlight{$\mathbf n'\gets\mathbf n+\mathbf e_{r(d)}$}, $c'\gets c+c_{g,t+1}(d,f)$; $\pi'\gets(g,\mathbf n',c',s')$.
  \State \textbf{if} no affordable completion, dominated \resnscAlgHighlight{within $(g,\mathbf n')$}, or $U(\pi')\le\mathrm{LB}$ \textbf{then continue}.
  \State \textbf{if} $t+1=L_g$ \textbf{then} update the incumbent and $\mathrm{LB}$ only on improvement.
  \State \textbf{else} insert $\pi'$ into $\mathcal Q$ and update the cost--score frontier at \resnscAlgHighlight{$(g,\mathbf n')$}.
 \EndFor
\EndWhile
\State \Return incumbent architecture and $(\mathrm{LB},\mathrm{UB})$, where $\mathrm{UB}=\max\bigl(\{\mathrm{LB}\}\cup\{U(\pi):\pi\in\mathcal Q\}\bigr)$.
\end{algorithmic}
\end{algorithm}

\FloatBarrier
\section{Experiments}
\label{sec:experiments}

We evaluate whether residual context captures consequential architectural differences, improves resource allocation, and retains NSC's global ranking utility. Experiments cover from-scratch pretraining at 121M and 374M, inherited-weight pruning of LoNAS-LLaMA-7B, and ranking across language and vision search spaces.

\subsection{Architecture search and pretraining}
\label{sec:pretraining}

\paragraph{Setup.}
We study 121M and 374M decoder-only Transformers with 16 and 24 blocks, jointly varying each block's residual width $d_\ell$ and FFN width $f_\ell$ without a prescribed taper. ResNSC-DP selects architectures from specifications alone under the Uniform model's total-parameter ceiling; NSC-DP \citep{zhu2026nsc} uses the same menus and ceiling. We also compare constant-width Uniform, whole-block bottleneck VWT \citep{wu2026variablewidth}, and FFN-tapered TLM \citep{bayat2026tapered} profiles at comparable parameter counts. Reverse reverses the ResNSC-DP sequence of $(d_\ell,f_\ell)$ pairs, preserving projection shapes and the NSC score. Model specifications, menus, and configurations appear in Appendix~\ref{app:widths}, and search certificates in Appendix~\ref{app:search-measurements}.

All architectures within a scale are trained from scratch on the same FineWeb-Edu subset \citep{penedo2024fineweb}, with a shared $\mu$P/AdamW recipe and token budgets of 2.662B at 121M and 8.228B at 374M. We evaluate held-out FineWeb-Edu loss and perplexity, WikiText-2 word perplexity, LAMBADA perplexity, and eight-task zero-shot accuracy at 374M. Means and sample standard deviations use three independent runs for the 121M main comparisons. The 374M experiments and 121M ablations use one full run plus two final-decay continuations, with standard deviations measuring annealing variation (Appendix~\ref{app:training-statistics}).

\begin{table}[!htb]
\providecommand{\sd}[1]{{\scriptsize($\pm$#1)}}
\centering
\caption{Language-modeling results after final decay at 121M and 374M: FineWeb-Edu perplexity (PPL), WikiText-2 word perplexity, LAMBADA perplexity, and, at 374M, mean eight-task zero-shot accuracy (Avg8). 121M: means and sample standard deviations over three independent runs; 374M: one run plus two final-decay continuations, so standard deviations reflect annealing variation only. Bold: best per column; per-task accuracies in Appendix~\ref{app:training-statistics}.}
\label{tab:rsc-pretraining}\label{tab:rsc-121M-anneal}\label{tab:rsc-340M-anneal}
\small
\setlength{\tabcolsep}{3pt}
\renewcommand{\arraystretch}{1.1}
\resizebox{\linewidth}{!}{%
\begin{tabular}{@{}l rrr rrrr@{}}
\toprule
 & \multicolumn{3}{c}{121M} & \multicolumn{4}{c}{374M} \\
\cmidrule(lr){2-4}\cmidrule(l){5-8}
Method & PPL$\downarrow$ & WikiText-2$\downarrow$ & LAMBADA$\downarrow$ & PPL$\downarrow$ & WikiText-2$\downarrow$ & LAMBADA$\downarrow$ & Avg8$\uparrow$ \\
\midrule
\rowcolor{rscblue} \textbf{ResNSC (ours)} & \textbf{19.39}\,\sd{0.01} & \textbf{51.13}\,\sd{0.21} & \textbf{202.8}\,\sd{13.5} & \textbf{13.60}\,\sd{0.002} & \textbf{31.20}\,\sd{0.03} & \textbf{53.0}\,\sd{0.4} & \textbf{44.60}\,\sd{0.07} \\
uniform & 20.25\,\sd{0.02} & 55.47\,\sd{0.07} & 262.9\,\sd{13.7} & 14.27\,\sd{0.002} & 33.82\,\sd{0.05} & 76.2\,\sd{2.0} & 44.07\,\sd{0.19} \\
VWT & 19.48\,\sd{0.06} & 52.33\,\sd{0.15} & 252.6\,\sd{35.7} & 13.65\,\sd{0.002} & 31.99\,\sd{0.08} & 70.6\,\sd{2.1} & 44.26\,\sd{0.11} \\
TLM & 20.38\,\sd{0.02} & 55.43\,\sd{0.53} & 252.1\,\sd{8.5} & 14.43\,\sd{0.001} & 34.19\,\sd{0.02} & 78.1\,\sd{0.6} & 43.73\,\sd{0.05} \\
NSC & 20.00\,\sd{0.03} & 53.40\,\sd{0.31} & 221.4\,\sd{10.1} & 13.82\,\sd{0.003} & 32.40\,\sd{0.11} & 66.9\,\sd{1.9} & 44.26\,\sd{0.09} \\
Reverse & 20.27\,\sd{0.05} & 53.22\,\sd{0.55} & 234.6\,\sd{9.3} & 14.48\,\sd{0.001} & 33.93\,\sd{0.01} & 77.7\,\sd{1.6} & 42.86\,\sd{0.18} \\
\bottomrule
\end{tabular}}
\end{table}

\begin{table}[!htb]
\providecommand{\sd}[1]{{\scriptsize($\pm$#1)}}
\centering
\caption{Ablating the architecture score at 121M. All variants share the search space and parameter budget; profiles show the widest and final active widths. Rows 2--4 simplify residual history (none, branch position only, coordinate average); the last row removes the propagated write-source spectrum. Entries are one run (seed 42) plus two final-decay continuations; standard deviations reflect annealing variation only. Bold: best per column.}
\label{tab:rsc-component-ablation}
\small
\setlength{\tabcolsep}{3pt}
\renewcommand{\arraystretch}{1.08}
\begin{tabularx}{\linewidth}{@{}l X r r r@{}}
\toprule
Scoring rule & Profile $\max_\ell d_\ell\to d_{16}$ & PPL$\downarrow$ & WikiText-2$\downarrow$ & LAMBADA$\downarrow$ \\
\midrule
\rowcolor{rscblue} \textbf{ResNSC (full)} & $1280\to480$ & \textbf{19.39}\,\sd{0.01} & \textbf{51.04}\,\sd{0.20} & 211.2\,\sd{2.3} \\
w/o residual context & $800\to640$ (identical to NSC) & 20.00\,\sd{0.01} & 53.25\,\sd{0.21} & 233.6\,\sd{1.7} \\
Depth only & $1760\to320$ & 19.85\,\sd{0.01} & 52.85\,\sd{0.21} & 205.9\,\sd{2.5} \\
Mean history only & $1920\to320$ & 20.20\,\sd{0.01} & 54.61\,\sd{0.20} & 229.5\,\sd{2.7} \\
w/o propagated spectrum & $1440\to320$ & 19.72\,\sd{0.01} & 52.72\,\sd{0.20} & \textbf{197.5}\,\sd{3.3} \\
\bottomrule
\end{tabularx}
\end{table}

\paragraph{Results.}
Reversing the ResNSC-DP allocation increases mean FineWeb-Edu perplexity from 19.39 to 20.27 at 121M and from 13.60 to 14.48 at 374M, despite unchanged NSC scores (Table~\ref{tab:rsc-pretraining}). WikiText-2 and LAMBADA perplexities also increase at both scales. These differences show that the placement of the same projection shapes affects trained performance, exposing a limitation of NSC's order-insensitive aggregation.

Architectures selected by ResNSC-DP achieve the lowest mean perplexities across all three datasets and both scales in Table~\ref{tab:rsc-pretraining}. Against NSC-DP, FineWeb-Edu perplexity decreases from 20.00 to 19.39 at 121M and from 13.82 to 13.60 at 374M. At 374M, the selected architecture also achieves the highest mean eight-task accuracy, 44.60\%, versus 44.26\% for the strongest comparators, NSC and VWT (Appendix~\ref{app:training-statistics}).

At 121M, we test removing residual context, retaining only branch position or mean reference variance, and removing the propagated spectrum, with each variant selecting an architecture under the same menus and parameter ceiling (Table~\ref{tab:rsc-component-ablation}; definitions in Appendix~\ref{app:scoring-ablations}). Full ResNSC achieves the lowest mean FineWeb-Edu and WikiText-2 perplexity, supporting coordinate-wise residual history and within-branch propagation for these evaluations. The depth-only and no-propagation variants achieve lower LAMBADA perplexity.

\subsection{Structured pruning of LoNAS-LLaMA-7B}
\label{sec:lonas}

\paragraph{Setup.}
Following NSC \citep{zhu2026nsc}, we select subnetworks from the LoNAS LLaMA-7B supernet \citep{munoz2024lonas} with 32 blocks, fixed residual width 4,096, and variable FFN widths and Q/K/V LoRA ranks. Subnetworks inherit weights from the same supernet using the same channel-slicing procedure. We report Avg8, the mean accuracy over eight commonsense tasks; Appendix~\ref{app:lonas} gives the menus, tasks, and LoRA scoring extension.

ResNSC selects from architecture specifications without candidate weights or calibration inputs. The fixed-width specialization of ResNSC-DP uses the same exact layer-and-budget recurrence as NSC-DP, with contextual layer scores. Both methods use identical menus under eight shared parameter ceilings; their selected subnetworks have matching parameter counts at each ceiling. We include the SNIP \citep{lee2019snip}, SynFlow \citep{tanaka2020synflow}, GradNorm \citep{abdelfattah2021zerocost}, and W-PCA \citep{wang2025wpca} evolutionary-search results reported by NSC under a common 5.7B parameter ceiling. Reverse assigns the highest-budget ResNSC FFN widths in reverse order across the original supernet layers, preserving parameter count, FLOPs, and NSC score.

\Needspace{20\baselineskip}
\begingroup
\setlength{\intextsep}{3pt}
\setlength{\abovecaptionskip}{3pt}
\setlength{\belowcaptionskip}{0pt}
\setlength{\columnsep}{12pt}
\setlength{\parskip}{3pt}
\begin{wrapfigure}{r}{0.48\textwidth}
\centering
\setlength{\parskip}{0pt}
\includegraphics[width=\linewidth]{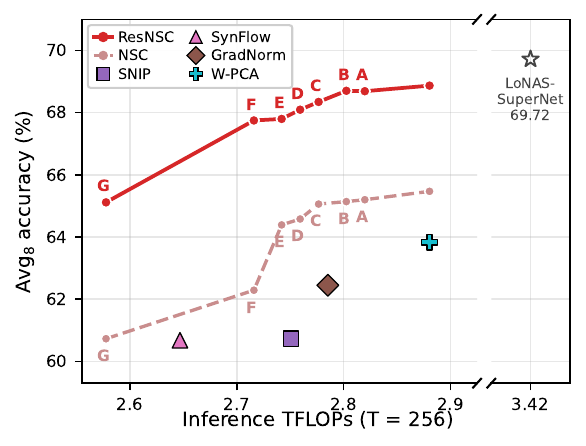}
\caption{ResNSC improves the accuracy--compute trade-off. Avg8 versus TFLOPs ($T=256$) across eight matched budgets (A--G: smaller budgets), with proxy baselines and the unpruned supernet (star). Full results in Appendix~\ref{app:lonas}.}
\label{fig:lonas-pareto}
\end{wrapfigure}
\paragraph{Results.}
Reverse achieves 59.49\% Avg8 versus 68.87\% for ResNSC, a 9.38-point difference at the same parameter count and NSC score (Table~\ref{tab:lonas-main}). The placement of retained FFN widths therefore also affects performance under weight inheritance.

At 5.690B parameters, ResNSC exceeds NSC by 3.40 points and the strongest proxy baseline, W-PCA, by 5.04 points in Avg8 (Table~\ref{tab:lonas-main}). This allocation retains 98.8\% of the unpruned supernet's Avg8 while using approximately 16\% fewer parameters. Across all eight matched NSC budgets, the gain is 3.28--5.46 points (Figure~\ref{fig:lonas-pareto}).

In this fixed-width setting, layer scoring and selection across all eight budgets take under one second, excluding supernet training and subnetwork evaluation. The NSC study reports subsecond search for NSC and 22--45 minutes for proxy baselines (Table~\ref{tab:lonas-main}).
\par\endgroup

\begin{table}[!htb]
\setlength{\abovecaptionskip}{3pt}
\setlength{\belowcaptionskip}{0pt}
\centering
\caption{LoNAS LLaMA-7B pruning on eight commonsense tasks (accuracy, \%). ResNSC and NSC use the same width/rank menu and matched parameter counts. Params reports each selected model's size. The supernet is a reference; \textbf{bold} marks the best pruned result per column. Reverse reassigns FFN widths in reverse order across the original layers, preserving parameter count and NSC score. Cost is search-stage wall-clock time.}
\label{tab:lonas-main}
\setlength{\tabcolsep}{3pt}
\resizebox{\linewidth}{!}{%
\begin{tabular}{@{}l r *{9}{r} r@{}}
\toprule
Method & Params & BoolQ & PIQA & SIQA & HSwag & WGrnd & ARC-e & ARC-c & OBQA & Avg$_8$ & Cost \\
\midrule
LoNAS-SuperNet & 6.74\,B & 66.02 & 77.58 & 72.93 & 57.49 & 66.85 & 78.41 & 62.29 & 76.20 & 69.72 & --- \\
\midrule
SNIP & 5.44\,B & 61.93 & 69.21 & 64.99 & 41.26 & 62.04 & 66.50 & 52.47 & 67.40 & 60.73 & 22 min \\
SynFlow & 5.23\,B & 61.13 & 70.24 & 63.25 & 45.94 & 59.98 & 66.84 & 51.88 & 66.20 & 60.68 & 23 min \\
GradNorm & 5.50\,B & 61.31 & 70.78 & 67.35 & 46.33 & 62.59 & 69.02 & 55.03 & 67.20 & 62.45 & 22 min \\
W-PCA & 5.69\,B & 63.82 & 70.95 & 67.91 & 49.24 & 62.75 & 69.57 & 56.40 & 70.00 & 63.83 & 45 min \\
NSC & 5.69\,B & \textbf{64.01} & 72.85 & 68.94 & 51.30 & 64.72 & 72.43 & 58.70 & 70.80 & 65.47 & $<$1 s \\
\midrule
\rowcolor{rscblue} ResNSC & 5.69\,B & 61.71 & \textbf{76.71} & \textbf{70.98} & \textbf{58.39} & \textbf{67.01} & \textbf{78.45} & \textbf{61.69} & \textbf{76.00} & \textbf{68.87} & $<$1 s \\
Reverse & 5.69\,B & 61.25 & 67.46 & 63.10 & 42.64 & 61.17 & 65.24 & 51.45 & 63.60 & 59.49 & --- \\
\bottomrule
\end{tabular}}
\end{table}

\subsection{Ranking across architecture search spaces}
\label{sec:ranking}

\paragraph{Setup.}
We examine whether incorporating residual context retains NSC's global ranking utility while distinguishing more candidate architectures. Following the NSC protocol \citep{zhu2026nsc}, we compare the scores on 500 FlexiBERT architectures \citep{serianni2023nasbench}, 200 GPT-2 architectures \citep{radford2019gpt2}, and 1,001 AutoFormer-Small candidates \citep{chen2021autoformer}, using identical candidates, trained-performance labels, and specifications. Table~\ref{tab:rsc-ranking} reports rank correlations and distinct-score counts; Appendix~\ref{app:ranking} gives the protocol and supplementary results.

\paragraph{Results.}
ResNSC retains strong global ranking correlations: Spearman $\rho$ is comparable to NSC on FlexiBERT and GPT-2 and higher on AutoFormer-Small (Table~\ref{tab:rsc-ranking}); it is lower on AutoFormer-Tiny (Appendix~\ref{app:ranking}). Distinct-score counts increase from 297 to 421 on FlexiBERT and from 952 to 1,001 on AutoFormer-Small; both assign distinct scores to all 200 GPT-2 candidates. Together with the paired reversals in Sections~\ref{sec:pretraining}--\ref{sec:lonas}, these distinctions suggest that architectural context can help guide resource allocation.

\begin{table}[ht]
\centering
\caption{Global ranking correlations and number of distinct scores.}
\label{tab:rsc-ranking}
\small
\setlength{\tabcolsep}{3pt}
\renewcommand{\arraystretch}{1.0}
\begin{tabular*}{\linewidth}{@{\extracolsep{\fill}}l r c c c@{}}
\toprule
Benchmark & $N$ & $\tau_b$ (NSC\,/\,ResNSC) & $\rho$ (NSC\,/\,ResNSC) & Distinct (NSC\,/\,ResNSC) \\
\midrule
FlexiBERT & 500 & 0.695\,/\,0.692 & 0.884\,/\,0.883 & 297\,/\,421 \\
GPT-2 & 200 & 0.849\,/\,0.854 & 0.968\,/\,0.970 & 200\,/\,200 \\
AutoFormer-Small & 1,001 & 0.658\,/\,0.719 & 0.852\,/\,0.907 & 952\,/\,1,001 \\
\bottomrule
\end{tabular*}
\end{table}

\FloatBarrier
\section{Conclusion}
\label{sec:conclusion}

We introduced Residual-Aware Neural Spectral Capacity (ResNSC), which scores each Transformer projection in the residual context implied by its architecture, using a reference constructed from the architecture specification alone. Because earlier width choices change later scores, ResNSC-DP extends NSC-DP with a residual-history state and bounds the best score within a specified finite candidate set.

Paired reversals show that allocations with identical projection shapes and NSC scores perform differently after from-scratch training and under weight inheritance. Optimizing ResNSC yields lower perplexity than uniform, VWT, TLM, and same-menu NSC allocations at 121M and 374M, and higher eight-task average accuracy than NSC at every matched LoNAS budget. Across broader search spaces, ResNSC retains strong global ranking correlations while distinguishing more candidate architectures. ResNSC keeps NSC's additive, data-free evaluation, providing a basis for studying other architecture families and larger candidate sets. Beyond the residual stream, conditioning each projection on further elements of the computational graph, such as attention mixing across positions and expert routing, is the next step toward architecture-faithful capacity evaluation from the specification alone.

\FloatBarrier
{\small
\bibliographystyle{plainnat}
\bibliography{\versionroot iclr2027_conference}
}
\appendix
\numberwithin{equation}{section}

\section{Notation and the local capacity identity}
\label{app:channel}

The main text uses $\psi$ for local capacity, $\widehat\psi$ for its deterministic spectral evaluation, $S_L^{\mathrm{block}}$ for a block sum, and $S$ for the architecture sum. Superscripts $\mathrm{read}$ and $\mathrm{write}$ distinguish the two reference-ensemble evaluations of $\psi$. Table~\ref{tab:notation} fixes the notation used throughout these appendices.

\begin{table}[htbp]
\centering
\caption{Notation shared by the main text and mathematical appendices.}
\label{tab:notation}
\small
\renewcommand{\arraystretch}{1.13}
\begin{tabularx}{\textwidth}{@{}lX@{}}
\toprule
Symbol & Meaning\\
\midrule
$\ell$, $b$ & Block index and branch index; attention has $b=2\ell-2$, FFN has $b=2\ell-1$.\\
$E,L,H,M$ & Embedding width, number of blocks, number of attention heads, and complete residual-state width.\\
$d_\ell,f_\ell$ & Active residual width and FFN intermediate width of block $\ell$.\\
$\widetilde d_b,k_b$ & Active residual width (shared by reading and writing) and latent width of branch $b$.\\
$h_b,p_b$ & Actual residual state and specification-derived reference-power ledger before branch $b$.\\
$\Sigma_x,\Sigma_z$ & Generic source and noise covariances in $\psi(W;\Sigma_x,\Sigma_z)$.\\
$\Sigma_{x,b}^{\mathrm{read}},\Sigma_{x,b}^{\mathrm{write}}$ & Read source covariance and propagated write source covariance.\\
$\Sigma_{z,b}^{\mathrm{write}}$ & Reference noise covariance for write scoring.\\
$W_b^{\mathrm{in}}$ & Normalized upstream reference, including the gate when present.\\
$\Sigma_{x,b}^{\mathrm{in}}$ & Normalized branch-input covariance (Appendices B--D); it has no read discount.\\
$X_b,G_b$ & Gaussian factor and diagonal gate used to construct $W_b^{\mathrm{in}}=G_bX_b$ in the appendices.\\
$W_{\ell,j}$ & Scored projection of type $j$ in block $\ell$.\\
$Z_L$ & Shared normalization of the full-width reference precision at depth $L$.\\
$\psi,\widehat\psi,S_L^{\mathrm{block}},S$ & Local functional, deterministic local evaluation, block sum, and architecture sum.\\
$\mathbf n,n_r,t$ & Residual-width count vector, count of width option $r$, and completed block count.\\
$g,\mathcal A_g,B_g$ & Fixed network specification, its block-choice menu, and resource budget (parameter ceiling in our experiments).\\
$c,s$ & Spent variable resource cost and accumulated prefix score.\\
$\lambda,V_{g,\lambda},U$ & Resource price, relaxed suffix value, and prefix upper bound.\\
\bottomrule
\end{tabularx}
\end{table}

\subsection{Gaussian identity, including singular sources}
\label{app:gaussian-identity}

Fix $W\in\mathbb R^{m\times n}$, $\Sigma_x\succeq0$, and $\Sigma_z\succ0$. Let
\begin{equation}
 x\sim\mathcal N(0,\Sigma_x),\qquad z\sim\mathcal N(0,\Sigma_z),\qquad
 x\perp z,\qquad y=Wx+z.
 \label{eq:channel}
\end{equation}
The output covariances are $\operatorname{Cov}(y\mid W)=W\Sigma_xW^\top+\Sigma_z$ and $\operatorname{Cov}(y\mid x,W)=\Sigma_z$. Subtracting their Gaussian differential entropies gives \citep{cover2006elements,telatar1999capacity}
\begin{equation}
 \begin{aligned}
 2I(x;y\mid W)
 &=\ln\det(W\Sigma_xW^\top+\Sigma_z)-\ln\det(\Sigma_z)\\
 &=\ln\det(I_m+\Sigma_z^{-1/2}W\Sigma_xW^\top\Sigma_z^{-1/2})\\
 &=\ln\det(I_n+\Sigma_x^{1/2}W^\top\Sigma_z^{-1}W\Sigma_x^{1/2}).
 \end{aligned}
 \label{eq:gaussian-proof}
\end{equation}
The last equality is Sylvester's determinant identity. The argument remains valid for singular $\Sigma_x$ because both output covariances are positive definite. The quantity is twice the mutual information for the prescribed Gaussian source.

For a random reference projection, we average this pointwise expression over its ensemble. For a write projection, the source covariance is itself generated by $W_b^{\mathrm{in}}$. The identity is then applied conditional on these factors before taking their expectation.

\section{The residual reference and its induced metrics}
\label{app:reference}

\subsection{Residual windows and a process realizing the ledger}
\label{app:residual-process}

Let $A_u\in\mathbb R^{u\times M}$ select the first $u$ residual coordinates. A branch with active residual width $\widetilde d_b$ has the actual residual form
\begin{equation}
 h_{b+1}=h_b+A_{\widetilde d_b}^\top F_b\!\left(\operatorname{Norm}(A_{\widetilde d_b}h_b)\right).
 \label{eq:residualupdate}
\end{equation}
This parameter-free selection and insertion preserves coordinates outside the write window, as also used in VWT \citep{wu2026variablewidth}. Both branches of block $\ell$ read and write its first $d_\ell$ coordinates.

To construct an architecture-only reference, introduce an auxiliary process $\widetilde h_b$:
\begin{equation}
 \begin{aligned}
 \widetilde h_0&\sim\mathcal N\!\left(0,\operatorname{diag}_{i\le M}\mathbf1\{i\le E\}\right),\\
 \widetilde h_{b+1}&=\widetilde h_b+A_{\widetilde d_b}^\top\varepsilon_b,
 &\varepsilon_b&\sim\mathcal N(0,I_{\widetilde d_b}).
 \end{aligned}
 \label{eq:reference-process}
\end{equation}
The initial state and all increments are independent. Consequently,
\begin{equation}
 \operatorname{Cov}(\widetilde h_b)=\operatorname{diag}(p_b),\qquad
 \operatorname{Cov}(\widetilde h_{b+1})
 =\operatorname{diag}(p_b)+A_{\widetilde d_b}^\top A_{\widetilde d_b},
 \label{eq:reference-covariance}
\end{equation}
which gives Equation~\eqref{eq:ledger}.

An intermediate FFN width changes neither the support nor the unit variance assigned to a reference write. The ledger therefore depends on previous residual-write windows, while FFN width enters the projection capacities and parameter costs. Within a search specification, we zero-pad ledgers to the maximum width allowed by its menu.

\subsection{Source normalization and inverse-background evaluation}
\label{app:source-metrics}

On a read window of width $\widetilde d_b$, the expected squared norm of the auxiliary state is $\widetilde d_b\bar p_b$. For the appendix derivations, denote the normalized branch-input covariance by $\Sigma_{x,b}^{\mathrm{in}}$. Replacing the sample root-mean-square normalization factor by $\sqrt{\bar p_b}$ yields
\begin{equation}
 \operatorname{Cov}\!\left(\frac{A_{\widetilde d_b}\widetilde h_b}{\sqrt{\bar p_b}}\right)
 =\frac{\operatorname{diag}(p_{b,1:\widetilde d_b})}{\bar p_b}=\Sigma_{x,b}^{\mathrm{in}},
 \qquad \bar p_b>0.
 \label{eq:source-covariance}
\end{equation}
This gives the mean-field normalized covariance. When $\bar p_b=0$, all source coordinates have zero reference power and we define $\Sigma_{x,b}^{\mathrm{in}}=0$.

For evaluation, we weight by the inverse of the sum of accumulated background and unit noise. Because all involved matrices are diagonal, the reference covariances in Equation~\eqref{eq:metrics} are equivalently
\begin{equation}
 \begin{aligned}
 \Sigma_{x,b}^{\mathrm{read}}&=Z_L (\Sigma_{x,b}^{\mathrm{in}})^{1/2}
       \bigl(I_{\widetilde d_b}+\operatorname{diag}(p_{b,1:\widetilde d_b})\bigr)^{-1}(\Sigma_{x,b}^{\mathrm{in}})^{1/2},\\
 \Sigma_{z,b}^{\mathrm{write}}&=Z_L^{-1}\bigl(I_{\widetilde d_b}+\operatorname{diag}(p_{b,1:\widetilde d_b})\bigr).
 \end{aligned}
 \label{eq:metrics-matrix-form}
\end{equation}
The branch propagates $\Sigma_{x,b}^{\mathrm{in}}$; the read covariance and write noise covariance apply inverse-background weighting during evaluation.

\subsection{Depth normalization and the scalar specialization}
\label{app:scalar-reference}

For the full-width reference with $\widetilde d_b=E$ for every branch, every coordinate in the window has $p_{b,i}=b+1$. The unnormalized receiver precisions are $(b+2)^{-1}$, so setting their mean to one gives
\begin{equation}
 \begin{aligned}
 Z_L&=\frac{2L}{\displaystyle\sum_{b=0}^{2L-1}(b+2)^{-1}},\qquad
 \Sigma_{x,b}^{\mathrm{in}}=I_E,\\
 \Sigma_{x,b}^{\mathrm{read}}&=(\Sigma_{z,b}^{\mathrm{write}})^{-1}=\frac{Z_L}{b+2}I_E.
 \end{aligned}
 \label{eq:isotropic}
\end{equation}
Every candidate of depth $L$ uses the same $Z_L$. In block $\ell$, the attention and FFN precisions are $Z_L/(2\ell)$ and $Z_L/(2\ell+1)$, respectively, reflecting the intervening attention write.

The harmonic form follows from unit increments and inverse-background evaluation. For a fixed active width different from $E$, the ledger is likewise depth-determined and is obtained directly from Equation~\eqref{eq:ledger}.

\subsection{Source support and the mean propagated covariance}
\label{app:source-support}

To expose the upstream ensemble in the appendix, write $W_b^{\mathrm{in}}=G_bX_b$, with normalized Gaussian factor $X_b\in\mathbb R^{k_b\times \widetilde d_b}$ and real diagonal gate $G_b$. Here $k_b$ is the input width of the scored output projection. Attention uses $k_b=\widetilde d_b$ and $G_b=I$; the FFN uses $k_b=f_\ell$ and the gate specified in Appendix~\ref{app:gate-reference}. Thus Equation~\eqref{eq:latent} is $\Sigma_{x,b}^{\mathrm{write}}=G_bX_b\Sigma_{x,b}^{\mathrm{in}}X_b^\top G_b$.

The diagonal factors in Equation~\eqref{eq:metrics-matrix-form} are positive on the source support, so $\Sigma_{x,b}^{\mathrm{read}}$ and $\Sigma_{x,b}^{\mathrm{in}}$ have the same rank. If $r=\operatorname{rank}\Sigma_{x,b}^{\mathrm{in}}$, then
\begin{equation}
 \operatorname{rank}(W\Sigma_{x,b}^{\mathrm{read}}W^\top)\le r,\qquad
 \operatorname{rank}(G_bX_b\Sigma_{x,b}^{\mathrm{in}}X_b^\top G_b)\le\min(k_b,r).
 \label{eq:source-rank}
\end{equation}
Expanding the first read window beyond $E$ therefore leaves its source rank at most $E$. A wider output can distribute features over a larger destination without increasing the current input rank. Subsequent reference writes update the ledger through Equation~\eqref{eq:ledger}.

For nonzero $\Sigma_{x,b}^{\mathrm{in}}$, $\operatorname{tr}\Sigma_{x,b}^{\mathrm{in}}=\widetilde d_b$. Independent Gaussian entries of $X_b$ with variance $1/\widetilde d_b$ satisfy
\begin{equation}
 \begin{aligned}
 \mathbb E_{X_b}[X_b\Sigma_{x,b}^{\mathrm{in}}X_b^\top]
 &=\frac{\operatorname{tr}\Sigma_{x,b}^{\mathrm{in}}}{\widetilde d_b}I_{k_b}=I_{k_b},\\
 \mathbb E_{X_b,G_b}[\Sigma_{x,b}^{\mathrm{write}}]&=\mathbb E_{G_b}[G_b I_{k_b}G_b]=I_{k_b},
 \end{aligned}
 \label{eq:latent-mean}
\end{equation}
where the second line uses independence and unit second moment of every gate entry. When $\Sigma_{x,b}^{\mathrm{in}}=0$, the propagated covariance is zero instead. ResNSC evaluates each propagated covariance inside the log-determinant before averaging.

\section{Projection-specific reference ensembles}
\label{app:projections}
\label{app:heads}

This appendix specifies the reference ensembles for the common capacity functional. For local formulas we suppress the branch subscript on $\Sigma_{x,b}^{\mathrm{in}},\Sigma_{x,b}^{\mathrm{read}},\Sigma_{z,b}^{\mathrm{write}},X_b,G_b$ and write the scored projection as $W$. In a write term, $W$ denotes the output projection and $X$ its normalized upstream reference.

\subsection{Complete reads and inherited entry variances}
\label{app:read-ensemble}

For $W\in\mathbb R^{m\times d}$ with independent $\mathcal N(0,v)$ entries, the read-ensemble evaluation is
\begin{equation}
 \psi^{\mathrm{read}}_v(\Sigma_x^{\mathrm{read}};m)
 :=\mathbb E_W\psi(W;\Sigma_x^{\mathrm{read}},I_m)
 =\mathbb E_W\ln\det(I_m+W\Sigma_x^{\mathrm{read}}W^\top).
 \label{eq:readcapacity}
\end{equation}
Its deterministic evaluation is denoted $\widehat\psi^{\mathrm{read}}_v(\Sigma_x^{\mathrm{read}};m)$. Table~\ref{tab:variance-conventions} gives the reference variances for all seven projections.

\begin{table}[htbp]
\centering
\caption{Projection and propagation conventions for a standard block, with $d=d_\ell$ and $f=f_\ell$.}
\label{tab:variance-conventions}
\small
\renewcommand{\arraystretch}{1.2}
\begin{tabular}{@{}llll@{}}
\toprule
Projection & Scored matrix shape & Entry variance & Upstream reference for a write\\
\midrule
Q, K, V & $d\times d$ & $2/(d+d/H)$ & None\\
O & $d\times d$ & $1/d$ & $k=d$, $\operatorname{Var}(X_{ij})=1/d$, $G=I_d$\\
gate, up & $f\times d$ & $2/(d+f)$ & None\\
down & $d\times f$ & $2/(d+f)$ & $k=f$, $\operatorname{Var}(X_{ij})=1/d$, normalized gate\\
\bottomrule
\end{tabular}
\end{table}

For attention, each complete Q, K, or V matrix vertically stacks $H$ head matrices of shape $(d/H)\times d$. Each stacked projection retains the per-head entry variance $2/(d+d/H)$ and is evaluated as one complete observation. The number of heads and admissible widths must make $d/H$ an integer in this standard attention setting.

\paragraph{Joint observation of a shared source.}
Let $x\sim\mathcal N(0,\Sigma_x^{\mathrm{read}})$ and let head $h$ observe $y_h=W_hx+z_h$, with mutually independent unit Gaussian receiver noises, independent of $x$. Define $K_h=(\Sigma_x^{\mathrm{read}})^{1/2}W_h^\top W_h(\Sigma_x^{\mathrm{read}})^{1/2}\succeq0$. Stacking the observations gives
\begin{equation}
 \psi^{\mathrm{joint}}=\ln\det\!\left(I_d+\sum_{h=1}^{H}K_h\right),\qquad
 \psi^{\mathrm{separate}}=\sum_{h=1}^{H}\ln\det(I_d+K_h).
 \label{eq:jointheads}
\end{equation}
For positive semidefinite $K_1,K_2$,
\begin{equation}
 \begin{aligned}
 \ln\det(I+K_1+K_2)
 &=\ln\det(I+K_1)
   +\ln\det\!\left(I+K_2^{1/2}(I+K_1)^{-1}K_2^{1/2}\right)\\
 &\le\ln\det(I+K_1)+\ln\det(I+K_2).
 \end{aligned}
 \label{eq:head-inequality}
\end{equation}
The equality follows from determinant factorization and Sylvester's identity; the inequality uses $(I+K_1)^{-1}\preceq I$ and monotonicity of the log-determinant. Induction gives $\psi^{\mathrm{joint}}\le\psi^{\mathrm{separate}}$, pointwise and hence in expectation under matched ensembles. Exact recovery of an original NSC implementation uses its projection decomposition and variance conventions together with isotropic local metrics.

\subsection{Normalized propagation and the gated FFN reference}
\label{app:gate-reference}

A write projection $W\in\mathbb R^{d\times k}$ receives the source $\Sigma_x^{\mathrm{write}}=GX\Sigma_x^{\mathrm{in}}X^\top G$. Its normalized upstream factor has shape $X\in\mathbb R^{k\times d}$ and independent entries with variance $v_{\mathrm{in}}=1/d$. The scored matrix has independent entries with variance $v_{\mathrm{out}}=2/(d+k)$. The factors $X,G,W$ are independent in this reference.

For attention, $k=d$ and $G=I_d$ define a normalized value-feature-to-output reference. For the gated FFN, $k=f$ and the independent diagonal entries of $G$ follow the normalized SiLU law
\begin{equation}
 \eta=\frac{\operatorname{SiLU}(\sqrt q\,z)}
 {\sqrt{\mathbb E_{z\sim\mathcal N(0,1)}[\operatorname{SiLU}(\sqrt q\,z)^2]}},
 \quad z\sim\mathcal N(0,1),\quad q=\frac{2d}{d+f},\quad
 \mu_{d,f}=\mathcal L(\eta^2).
 \label{eq:gate}
\end{equation}
The mean-field preactivation variance is $q=\frac{2}{d+f}\operatorname{tr}\Sigma_x^{\mathrm{in}}=2d/(d+f)$ for a nonzero source. Normalization gives $\mathbb E\eta^2=1$. The independent-gate approximation retains this marginal gate variation, and the real diagonal $G$ preserves positive semidefiniteness of $GX\Sigma_x^{\mathrm{in}}X^\top G$.

The expected write capacity is
\begin{equation}
 \begin{aligned}
 &\psi^{\mathrm{write}}(\Sigma_x^{\mathrm{in}},\Sigma_z^{\mathrm{write}};k,v_{\mathrm{in}},v_{\mathrm{out}},\mu)\\
 &\qquad:=\mathbb E_{W,X,G}\psi(W;GX\Sigma_x^{\mathrm{in}}X^\top G,\Sigma_z^{\mathrm{write}})\\
 &\qquad=\mathbb E_{W,X,G}\ln\det\!\left(
 I_d+(\Sigma_z^{\mathrm{write}})^{-1/2}WGX\Sigma_x^{\mathrm{in}}X^\top GW^\top (\Sigma_z^{\mathrm{write}})^{-1/2}\right).
 \end{aligned}
 \label{eq:writecapacity}
\end{equation}
Here $\mu$ is the law of a squared gate entry, and the input and output residual widths are both $d$, while the intermediate width is $k$. Its deterministic evaluation is $\widehat\psi^{\mathrm{write}}$. For a zero source covariance both versions are zero. The Gaussian-channel identity applies conditional on $X,G,W$, preserving the propagated spectrum inside the log-determinant before averaging.

\subsection{The complete block sum}
\label{app:block-score}

At the attention branch of block $\ell$, write $b=2\ell-2$, $d=d_\ell$, and $f=f_\ell$. The next ledger is $p_{b+1}=p_b+\mathbf1_{[1,d]}$, where $\mathbf1_{[1,d]}$ is an $M$-dimensional indicator of the active prefix. The covariance and metrics for the FFN are recomputed from $p_{b+1}$ using the same read width $d$. Let $v_A=2/(d+d/H)$ and $v_F=2/(d+f)$. Then the block shorthand in Equation~\eqref{eq:architecturecapacity} expands as
\begin{equation}
 \begin{aligned}
 S_L^{\mathrm{block}}(p_b;d,f)
 ={}&3\widehat\psi^{\mathrm{read}}_{v_A}(\Sigma_{x,b}^{\mathrm{read}};d)\\
   &+\widehat\psi^{\mathrm{write}}(\Sigma_{x,b}^{\mathrm{in}},\Sigma_{z,b}^{\mathrm{write}};d,1/d,1/d,\delta_1)\\
   &+2\widehat\psi^{\mathrm{read}}_{v_F}(\Sigma_{x,b+1}^{\mathrm{read}};f)\\
   &+\widehat\psi^{\mathrm{write}}(\Sigma_{x,b+1}^{\mathrm{in}},\Sigma_{z,b+1}^{\mathrm{write}};f,1/d,v_F,\mu_{d,f}).
 \end{aligned}
 \label{eq:block-expanded}
\end{equation}
The coefficients count the three attention reads and two FFN reads; the remaining terms score O and down. After the FFN write, $p_{b+2}=p_b+2\mathbf1_{[1,d]}$.

The dependence on $L$ enters through $Z_L$; $E,H$, the gate choice, and variance conventions are fixed by the specification.

\section{Deterministic evaluation of local capacity}
\label{app:spectral}

The expectations in Equations~\eqref{eq:readcapacity} and~\eqref{eq:writecapacity} are evaluated using deterministic equivalents for Gaussian covariance and product ensembles \citep{hoydis2011iterative}. These approximate the expected log-determinants at finite dimensions. Their deterministic values, with the declared gate quadrature, define the numerical objective optimized by ResNSC-DP.

The large-system interpretation scales the participating dimensions proportionally while retaining their aspect ratios and using normalized variances and controlled covariance spectra. The scalar systems below take the reference-metric eigenvalues as inputs.

\subsection{Single-projection evaluation}
\label{app:single-spectrum}\label{app:single}

Let $\tau_1,\ldots,\tau_d$ be the eigenvalues of $\Sigma_x^{\mathrm{read}}\succeq0$ and let $W\in\mathbb R^{m\times d}$ have entry variance $v>0$. Solve
\begin{equation}
 1-\xi=\sum_{i=1}^{d}\frac{v\xi\tau_i}{1+mv\xi\tau_i},
 \qquad 0<\xi\le1.
 \label{eq:single-fixedpoint}
\end{equation}
The left side strictly decreases from one to zero, while the right side is continuous and nondecreasing. There is therefore a unique root; if all $\tau_i$ vanish, it is $\xi=1$. The deterministic read evaluation is
\begin{equation}
 \widehat\psi^{\mathrm{read}}_v(\Sigma_x^{\mathrm{read}};m)
 =\sum_{i=1}^{d}\ln(1+mv\xi\tau_i)-m\ln\xi-m(1-\xi).
 \label{eq:single-evaluation}
\end{equation}
For $\Sigma_x^{\mathrm{read}}=\tau I_d$, this reduces to the Marchenko--Pastur Shannon transform used by NSC with effective entry variance $v\tau$ \citep{marchenko1967distribution,zhu2026nsc}. For nonuniform histories, the entire eigenvalue list contributes. We solve Equation~\eqref{eq:single-fixedpoint} by bisection; a zero metric gives exactly zero capacity.

\subsection{Propagated write evaluation}
\label{app:product-spectrum}\label{app:product}

Let $\chi_1,\ldots,\chi_d$ be the eigenvalues of $\Sigma_x^{\mathrm{in}}\succeq0$ and $\rho_1,\ldots,\rho_d$ the precision eigenvalues of $(\Sigma_z^{\mathrm{write}})^{-1}\succ0$. Let $\mu$ be the squared-gate law. For a nonzero source, the coupled fixed-point system is
\begin{equation}
 \begin{aligned}
 \gamma&=\int\frac{u}{1+\alpha\beta u}\,d\mu(u),\\
 \alpha&=v_{\mathrm{in}}\sum_{i=1}^{d}
       \frac{\chi_i}{1+kv_{\mathrm{in}}\beta\gamma\chi_i},\qquad
 \beta=v_{\mathrm{out}}\sum_{j=1}^{d}
       \frac{\rho_j}{1+kv_{\mathrm{out}}\alpha\gamma\rho_j}.
 \end{aligned}
 \label{eq:product-fixedpoint}
\end{equation}
The associated deterministic write evaluation is
\begin{equation}
 \begin{aligned}
 &\widehat\psi^{\mathrm{write}}(\Sigma_x^{\mathrm{in}},\Sigma_z^{\mathrm{write}};k,v_{\mathrm{in}},v_{\mathrm{out}},\mu)\\
 &\quad=\sum_{i=1}^{d}\ln(1+kv_{\mathrm{in}}\beta\gamma\chi_i)
       +\sum_{j=1}^{d}\ln(1+kv_{\mathrm{out}}\alpha\gamma\rho_j)\\
 &\qquad\quad+k\int\ln(1+\alpha\beta u)\,d\mu(u)-2k\alpha\beta\gamma.
 \end{aligned}
 \label{eq:product-evaluation}
\end{equation}
These equations specialize the single-user double-scattering system in \citet[Theorems~4--5]{hoydis2011iterative}. In that paper's notation, set $N=n_1=d$, $N_1=k$, and the signal-to-noise ratio to one; take $R_1=kv_{\mathrm{out}}(\Sigma_z^{\mathrm{write}})^{-1}$, $T_1=dv_{\mathrm{in}}\Sigma_x^{\mathrm{in}}$, $Q_1=I_d$, and $S_1$ as the diagonal squared-gate matrix. Substituting $\bar g_1=\alpha$, $\delta_1=\beta$, and $g_1=(k/d)\beta\gamma$, and expressing the normalized gate trace as an integral against $\mu$, gives Equation~\eqref{eq:product-fixedpoint}. Multiplying the normalized log-determinant in their Theorem~5 by $d$ gives Equation~\eqref{eq:product-evaluation}, including the term $-2k\alpha\beta\gamma$.

We use the positive solution specified by this system. The cited complex-Gaussian result supplies the first-order spectral approximation used here for the real Gaussian reference ensembles. The numerical gate law below has finite support, consistent with the bounded-spectrum large-system setting. A zero source covariance gives zero capacity directly.

For attention, $\mu=\delta_1$. For the gated FFN, we use 32-point Gauss--Hermite quadrature for the standard-normal variable in Equation~\eqref{eq:gate}. Given nodes $z_s$ and normalized weights $\omega_s$, define
\begin{equation}
 u_s=\frac{\operatorname{SiLU}(\sqrt q\,z_s)^2}
 {\displaystyle\sum_{v=1}^{32}\omega_v\operatorname{SiLU}(\sqrt q\,z_v)^2},
 \qquad
 \mu_{d,f}^{(32)}=\sum_{s=1}^{32}\omega_s\delta_{u_s}.
 \label{eq:gate-quadrature}
\end{equation}
Then $\sum_s\omega_su_s=1$. All candidates use this discrete law in the numerical block score.

The coupled equations are evaluated by damped fixed-point iteration, initialized at $\alpha=v_{\mathrm{in}}\operatorname{tr}\Sigma_x^{\mathrm{in}}$ and $\beta=v_{\mathrm{out}}\operatorname{tr}[(\Sigma_z^{\mathrm{write}})^{-1}]$. Repeated eigenvalues are grouped with their multiplicities in every sum in Equations~\eqref{eq:single-fixedpoint}--\eqref{eq:product-evaluation}.

\paragraph{Numerical settings and finite-matrix validation.}
Single-projection evaluation uses 48 bisection steps. Product evaluation uses damping $1/2$, stopping tolerance $2\times10^{-12}$ on the relative fixed-point update, and a maximum of 1,000 iterations. All computations use float64 and the 32-point gate rule above.

Table~\ref{tab:spectral-mc-validation} compares these evaluations with explicit independent Gaussian reference matrices at $d\in\{64,128\}$. The isotropic history is $p_i=5$; the nonuniform history assigns 5 and 1 to equal coordinate groups; the zero-source case assigns 5 and 0. We use $L=8$, construct $\Sigma_x^{\mathrm{in}}$ by Equation~\eqref{eq:source-covariance} and $\Sigma_x^{\mathrm{read}},\Sigma_z^{\mathrm{write}}$ by Equation~\eqref{eq:metrics}, and set $k=d$ for these three cases. The gated case uses the nonuniform history and $k=2d$, drawing independent gates from the normalized 32-point law in Equation~\eqref{eq:gate-quadrature}. Each case uses 1,200 independent draws, upstream variance $1/d$, write variance $2/(d+k)$, and read variance $2/(d+k)$. The maximum relative differences round to 0.337\% for reads and 0.686\% for writes in these cases.
\begin{table}[!htbp]
\centering
\caption{Finite-matrix validation: absolute relative difference between the deterministic value and the mean of 1,200 independent reference draws (\%).}
\label{tab:spectral-mc-validation}
\small
\begin{tabular}{@{}lrrr@{}}
\toprule
Context & $d$ & Read difference (\%) & Write difference (\%) \\
\midrule
Isotropic & 64 & 0.176 & 0.444 \\
Nonuniform history & 64 & 0.222 & 0.397 \\
Zero source directions & 64 & 0.337 & 0.647 \\
SiLU-gated & 64 & 0.165 & 0.562 \\
Isotropic & 128 & 0.135 & 0.203 \\
Nonuniform history & 128 & 0.135 & 0.260 \\
Zero source directions & 128 & 0.132 & 0.337 \\
SiLU-gated & 128 & 0.047 & 0.686 \\
\bottomrule
\end{tabular}
\end{table}

\section{History-aware search, bounds, and specialization}
\label{app:search}

This appendix supplies the cost convention, state argument, resource-price bounds, and complete search procedure. All results concern the finite product menus in Equation~\eqref{eq:searchproblem}, with two equal-window writes per block.

\subsection{Search domain and parameter costs}
\label{app:search-domain}

A specification $g$ fixes $L_g,E_g,H_g$, a residual-width menu, admissible $(d,f)$ pairs in $\mathcal A_g$, and reference conventions. Each block chooses a pair from that menu. The depth and chosen pair determine the block cost $c_{g,\ell}(d,f)$; the specification determines the fixed cost $c_{\mathrm{fixed}}(g)$. Define the variable budget
\begin{equation}
 \mathcal B_g=B_g-c_{\mathrm{fixed}}(g).
 \label{eq:variable-budget}
\end{equation}
For bias-free standard attention and gated FFNs, two width-$d$ block norms, and a final norm of width equal to the last active width, the parameter cost is
\begin{equation}
 c_{g,\ell}(d,f)=4d^2+3df+2d+\mathbf1\{\ell=L_g\}d.
 \label{eq:parameter-cost}
\end{equation}
The terminal term counts the final width-$d_L$ RMSNorm, and $c_{\mathrm{fixed}}=2VE$ counts untied embedding and readout matrices. Appendix~\ref{app:widths} gives the instantiated parameter counts.

For prefixes within $g$, $\mathbf n$ counts the residual-width choices, $t=\sum_rn_r$, and $p_g(\mathbf n)$ is Equation~\eqref{eq:histogram} with that specification's embedding support and width menu. A local choice $(d,f)$ has score $S_{L_g}^{\mathrm{block}}(p_g(\mathbf n);d,f)$ with all other reference arguments fixed by $g$.

\subsection{Order dependence and sufficient history}
\label{app:history-proof}

For two choices $(d_1,f_1)$ and $(d_2,f_2)$ evaluated after a ledger $p$, the score difference on exchanging them is
\begin{equation}
 \begin{aligned}
 &S_L^{\mathrm{block}}(p;d_1,f_1)
 +S_L^{\mathrm{block}}(p+2\mathbf1_{[1,d_1]};d_2,f_2)\\
 &\qquad-S_L^{\mathrm{block}}(p;d_2,f_2)
 -S_L^{\mathrm{block}}(p+2\mathbf1_{[1,d_2]};d_1,f_1).
 \end{aligned}
 \label{eq:exchange}
\end{equation}
Both orders end at $p+2\mathbf1_{[1,d_1]}+2\mathbf1_{[1,d_2]}$. Their intermediate contexts differ, allowing different accumulated scores.

\begin{proposition}[Residual-history sufficiency and dominance]
\label{prop:sufficiency}
Fix $g$ and the two-write reference. Prefixes with the same count vector $\mathbf n$ have identical additional scores for every common suffix. If their remaining budgets agree, their feasible continuation sets agree. If their spent costs and accumulated scores satisfy $c_1\le c_2$ and $s_1\ge s_2$, respectively, then the first prefix dominates the second.
\end{proposition}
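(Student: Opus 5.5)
The plan is to reduce all three claims to one observation: within a fixed specification $g$, everything a suffix needs from its prefix is a function of the count vector $\mathbf n$ alone. Concretely, I would first show that the ledger before block $t+1$ equals $p_g(\mathbf n)$ from Equation~\eqref{eq:histogram}. This follows by induction on $t$ from Equation~\eqref{eq:ledger}, since each block adds $\mathbf1\{i\le d_\ell\}$ twice (attention, then FFN) and addition is commutative. The block index is likewise $t+1=\sum_r n_r+1$. The block score $S_{L_g}^{\mathrm{block}}(p;d,f)$ depends only on $p$, the choice $(d,f)$, and quantities fixed by $g$ ($Z_{L_g}$, $E_g$, $H_g$, variance conventions, gate law); the attention-to-FFN update inside the block depends only on $d$, by Equation~\eqref{eq:block-expanded}.

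\emph{Identical additional scores.} Take a common suffix $(d_{t+1},f_{t+1}),\ldots,(d_{L_g},f_{L_g})$ appended to two prefixes with the same $\mathbf n$. I would induct over the suffix position with the invariant that both extended prefixes carry equal count vectors $\mathbf n+\sum\mathbf e_{r(d_\ell)}$. By the observation above, the two extensions then have equal ledgers and equal block indices, so each appended block contributes the same score. The invariant is preserved because the update $\mathbf n'=\mathbf n+\mathbf e_{r(d)}$ depends only on the chosen $d$. Summing the suffix terms of Equation~\eqref{eq:architecturecapacity} gives identical additional scores. The prefix order matters only through the already accumulated $s$.

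\emph{Feasible continuations.} A suffix is feasible iff each pair lies in $\mathcal A_g$ and $\sum_{\ell>t}c_{g,\ell}(d_\ell,f_\ell)\le \mathcal B_g-c$. Block costs depend only on $\ell$ and $(d,f)$, and $\ell$ is determined by $t$, hence by $\mathbf n$. So equal remaining budgets $\mathcal B_g-c$ give identical feasible suffix sets.

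\emph{Dominance.} Suppose $c_1\le c_2$. Then the remaining budget of the first prefix is at least that of the second, and by the previous step every suffix feasible for prefix 2 is feasible for prefix 1. For any such suffix with additional score $\Delta$ (common to both by the first claim), the totals satisfy $s_1+\Delta\ge s_2+\Delta$. Hence the best completion of prefix 1 is at least that of prefix 2, and discarding the second loses no optimal value.

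The main obstacle is making the state argument airtight. Specifically, one must check that no block-score ingredient secretly depends on the ordered history rather than on $\mathbf n$. Two places need attention. First, the zero-padding of ledgers to the menu's maximum width must use a width fixed by $g$. Second, the terminal RMSNorm cost term $\mathbf1\{\ell=L_g\}d$ depends on $\ell$. Both are resolved because they depend only on $g$ and $t$. With that check done, the rest is routine induction.
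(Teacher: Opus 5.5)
Your proposal is correct and follows the paper's proof essentially step for step: you reconstruct the ledger via \eqref{eq:histogram} and the next block index via $t=\sum_r n_r$, induct over the common suffix with equal count vectors as the invariant, observe that feasibility depends only on the block index, the choices, and the remaining additive budget, and add the common suffix score to $s_1\ge s_2$ to obtain dominance. Your explicit derivation of \eqref{eq:histogram} from \eqref{eq:ledger}, and your checks of the zero-padding width and the terminal-norm cost term, spell out details the paper leaves implicit without changing the argument.
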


\begin{proof}
Equation~\eqref{eq:histogram} reconstructs the ledger from $\mathbf n$, and $t=\sum_rn_r$ fixes the next block index. For a common next choice $(d,f)$, both prefixes therefore use identical reference contexts, contribute the same value $S_{L_g}^{\mathrm{block}}(p_g(\mathbf n);d,f)$, and enter the same next count state. Induction over the suffix length gives equality of all additional scores. Feasible continuations depend only on the fixed specification, block index, choices, and remaining additive budget. Any suffix affordable after cost $c_2$ is affordable after $c_1\le c_2$. Adding its common score to $s_1\ge s_2$ proves dominance.
\end{proof}

The result applies to the stated product menus and additive budget. Residual widths determine the unit-write increments; $c$ and $s$ retain the costs and earned scores of all choices, including FFN widths.

\subsection{Resource-price suffix bounds}
\label{app:resource-bound}

A prefix label is $\pi=(g,\mathbf n,c,s)$. Let $r(d)$ be the index of width $d$ in the menu, and let $\mathbf e_r$ be the corresponding unit count vector. For a nonnegative resource price $\lambda$, define
\begin{equation}
 \begin{aligned}
 V_{g,\lambda}(\mathbf n)
 =\max_{(d,f)\in\mathcal A_g}\bigl\{
 &S_{L_g}^{\mathrm{block}}(p_g(\mathbf n);d,f)-\lambda c_{g,t+1}(d,f)\\
 &+V_{g,\lambda}(\mathbf n+\mathbf e_{r(d)})\bigr\},
 \qquad V_{g,\lambda}(\mathbf n)=0\ \text{if }t=L_g.
 \end{aligned}
 \label{eq:potential}
\end{equation}
This backward recurrence retains history transitions but relaxes the hard remaining budget into a linear cost penalty. For a nonempty finite price set $\Lambda_g\subset[0,\infty)$, define
\begin{equation}
 U(\pi)=s+\min_{\lambda\in\Lambda_g}
 \left\{V_{g,\lambda}(\mathbf n)+\lambda(\mathcal B_g-c)\right\}.
 \label{eq:upperbound}
\end{equation}

\begin{proposition}[Admissible completion bound]
\label{prop:completion-bound}
When the suffix potentials in Equation~\eqref{eq:potential} are evaluated exactly, $U(\pi)$ bounds the score of every budget-feasible complete architecture extending $\pi$.
\end{proposition}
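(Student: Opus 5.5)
The argument is a standard Lagrangian-relaxation bound: fix one price, show the penalized suffix value dominates every feasible completion, then take the minimum over prices.

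\textbf{Setup.} Fix a prefix $\pi=(g,\mathbf n,c,s)$ with $t=\sum_r n_r$ completed blocks. Take any budget-feasible complete architecture extending $\pi$, and let its suffix choices be $(d_{t+1},f_{t+1}),\ldots,(d_{L_g},f_{L_g})$. Define the intermediate count vectors by $\mathbf n_t=\mathbf n$ and $\mathbf n_{\ell}=\mathbf n_{\ell-1}+\mathbf e_{r(d_\ell)}$. By Equation~\eqref{eq:histogram} and Proposition~\ref{prop:sufficiency}, the block at position $\ell$ is scored in context $p_g(\mathbf n_{\ell-1})$. The total score is therefore
\[
s+\sigma,\qquad \sigma=\sum_{\ell=t+1}^{L_g}S_{L_g}^{\mathrm{block}}(p_g(\mathbf n_{\ell-1});d_\ell,f_\ell).
\]
Feasibility gives the suffix cost bound $\kappa=\sum_{\ell>t}c_{g,\ell}(d_\ell,f_\ell)\le\mathcal B_g-c$.

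\textbf{Step 1 (one price).} Fix $\lambda\in\Lambda_g$. I will show by backward induction on $t$, from $L_g$ down, that for every count vector $\mathbf n$ and every suffix sequence of choices,
\[
V_{g,\lambda}(\mathbf n)\ \ge\ \sum_{\ell>t}\Bigl[S_{L_g}^{\mathrm{block}}(p_g(\mathbf n_{\ell-1});d_\ell,f_\ell)-\lambda c_{g,\ell}(d_\ell,f_\ell)\Bigr].
\]
The base case $t=L_g$ is $0\ge0$. For the inductive step, the maximum in Equation~\eqref{eq:potential} is at least its value at the actual first choice $(d_{t+1},f_{t+1})$. That value is the first summand plus $V_{g,\lambda}(\mathbf n+\mathbf e_{r(d_{t+1})})$, and the induction hypothesis bounds the latter below by the rest of the sum. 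Note that the block index used for the cost, $t+1$, is exactly the one recovered from $\mathbf n$.

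\textbf{Step 2 (combine with the budget).} Step 1 gives $V_{g,\lambda}(\mathbf n)\ge\sigma-\lambda\kappa$, so $\sigma\le V_{g,\lambda}(\mathbf n)+\lambda\kappa$. Since $\lambda\ge0$ and $\kappa\le\mathcal B_g-c$, this yields
\[
\sigma\le V_{g,\lambda}(\mathbf n)+\lambda(\mathcal B_g-c).
\]
This is the only place nonnegativity of the price is used.

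\textbf{Step 3 (all prices).} The bound in Step 2 holds for every $\lambda\in\Lambda_g$, so it also holds for the minimum over this finite nonempty set. Adding $s$ gives that the total score of the completion is at most $U(\pi)$, as in Equation~\eqref{eq:upperbound}.

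\textbf{Main subtlety.} The step that needs care is justifying that the recurrence in Equation~\eqref{eq:potential} tracks the true scoring contexts, i.e.\ that the score depends on the history only through $\mathbf n$. This is supplied by Proposition~\ref{prop:sufficiency}. Exactness of the $V_{g,\lambda}$ evaluation is an assumption of the statement. No step relies on feasibility inside the recursion, because the relaxed maximum ranges over all menu choices and is therefore only larger.
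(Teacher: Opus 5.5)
Your proof is correct and follows the paper's Lagrangian-relaxation argument: for each price $\lambda\ge0$, bound $s_{\mathrm{suf}}-\lambda c_{\mathrm{suf}}$ by $V_{g,\lambda}(\mathbf n)$, use $c_{\mathrm{suf}}\le\mathcal B_g-c$ to obtain a bound involving $\lambda(\mathcal B_g-c)$, and take the minimum over $\Lambda_g$. Your backward induction spells out the paper's one-line claim that the relaxed recurrence maximizes over all suffix choices, including budget-infeasible ones.
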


\begin{proof}
Take any feasible suffix with cost $c_{\mathrm{suf}}\le\mathcal B_g-c$ and additional score $s_{\mathrm{suf}}$. The relaxed recurrence maximizes over all such suffix choices and possibly budget-infeasible ones. Hence, for each $\lambda\ge0$,
\begin{equation}
 s_{\mathrm{suf}}-\lambda c_{\mathrm{suf}}\le V_{g,\lambda}(\mathbf n),
 \qquad
 s+s_{\mathrm{suf}}\le s+V_{g,\lambda}(\mathbf n)+\lambda(\mathcal B_g-c).
 \label{eq:bound-proof}
\end{equation}
Each price gives an upper bound on the same feasible completion; taking their minimum proves the claim. A conservatively overestimated potential also preserves the inequality.
\end{proof}

The state graph is acyclic because every transition increases $t$ by one. Potentials can therefore be computed backward. States known to lie on no budget-feasible root-to-terminal path may be removed, provided the remaining graph still contains every feasible completion relevant to the bound.

\subsection{ResNSC-DP and the global objective gap}
\label{app:solver}

ResNSC-DP maintains a maximum-priority queue $\mathcal Q$ of prefixes, nondominated frontiers indexed by $(g,\mathbf n)$, and an incumbent complete architecture $(\widehat g,\widehat a)$. Its score is $\mathrm{LB}=S_{\widehat g}(\widehat a)$. The initialization and search operations of Algorithm~\ref{alg:nsc-cert-full} are specified below.

\begin{algorithm}[t]
\caption{ResNSC-DP: detailed search procedure. Here $r(d)$ indexes residual width $d$ in its menu, and $\mathbf e_r$ is the corresponding unit count vector.}
\label{alg:nsc-cert-full}
\small
\begin{algorithmic}[1]
\Require Specifications $\mathcal G$, block menus $\mathcal A_g$, budgets $B_g$, tolerance $\epsilon\ge0$
\Ensure Architecture $(\widehat g,\widehat a)$ and bounds $(\mathrm{LB},\mathrm{UB})$, or infeasibility
\State $(\mathcal Q,\widehat g,\widehat a,\mathrm{LB})\gets\Call{Initialize}{\mathcal G,\{\mathcal A_g,B_g\}}$
\State \textbf{if} $\mathrm{LB}=-\infty$ \textbf{then return} infeasible
\While{$\mathcal Q\ne\varnothing$}
 \State $\mathcal Q\gets\Call{Prune}{\mathcal Q}$; $\mathrm{UB}\gets\max\{\mathrm{LB},\max_{\pi\in\mathcal Q}U(\pi)\}$
 \State \textbf{if} $\mathrm{UB}-\mathrm{LB}\le\epsilon$ or \Call{LimitReached}{} \textbf{then break}
 \State $\pi=(g,\mathbf n,c,s)\gets\Call{PopMax}{\mathcal Q}$; $t\gets\sum_r n_r$
 \For{$(d,f)\in\mathcal A_g$}
  \State $\mathbf n'\gets\mathbf n+\mathbf e_{r(d)}$, \quad $c'\gets c+c_{g,t+1}(d,f)$
  \State $s'\gets s+S_{L_g}^{\mathrm{block}}(p_g(\mathbf n);d,f)$, \quad $\pi'\gets(g,\mathbf n',c',s')$
  \State \textbf{if} $\neg\Call{Feasible}{\pi'}$ or \Call{Dominated}{$\pi'$} or $U(\pi')\le\mathrm{LB}$ \textbf{then continue}
  \State \textbf{if} $t+1=L_g$ \textbf{then} $(\widehat g,\widehat a,\mathrm{LB})\gets\Call{UpdateBest}{\pi'}$
  \State \textbf{else} \Call{Insert}{$\mathcal Q,\pi'$}
 \EndFor
\EndWhile
\State $\mathcal Q\gets\Call{Prune}{\mathcal Q}$; $\mathrm{UB}\gets\max\{\mathrm{LB},\max_{\pi\in\mathcal Q}U(\pi)\}$
\State \Return $(\widehat g,\widehat a),\mathrm{LB},\mathrm{UB}$ \Comment{$\max\varnothing=-\infty$}
\end{algorithmic}
\end{algorithm}

\paragraph{Initialization.}
\textsc{Initialize} prepares spectral values and resource-price potentials. Minimum-cost paths supply an incumbent for every feasible specification; price-solution paths and feasible one-block replacements improve it before feasible roots enter the queue.

\paragraph{Shared search operations.}
\textsc{Feasible} compares spent cost plus minimum completion cost with $\mathcal B_g$. \textsc{Dominated} and \textsc{Insert} maintain the frontier at $(g,\mathbf n)$ according to Proposition~\ref{prop:sufficiency}, marking displaced labels stale. \textsc{Prune} removes stale or incumbent-bounded entries, and \textsc{UpdateBest} retains the best feasible complete path. Time and expansion limits are checked between complete prefix expansions.

\begin{theorem}[Finite-menu objective certificate]
\label{thm:search}
Suppose Equation~\eqref{eq:searchproblem} has a nonempty feasible set and the suffix bounds are admissible. With exact arithmetic for the fixed objective, Algorithm~\ref{alg:nsc-cert-full} maintains the following bounds at initialization and after each complete prefix expansion:
\begin{equation}
 \mathrm{LB}=S_{\widehat g}(\widehat a)\le S^*\le
 \max\!\left\{\mathrm{LB},\max_{\pi\in\mathcal Q\text{ valid}}U(\pi)\right\}
 =\mathrm{UB}.
 \label{eq:global-gap}
\end{equation}
It returns an $\epsilon$-optimal architecture if $\mathrm{UB}-\mathrm{LB}\le\epsilon$, and an optimal architecture if the valid queue is exhausted. At an earlier stop, the same bounds quantify the remaining objective gap.
\end{theorem}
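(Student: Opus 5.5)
The argument is a loop invariant for branch-and-bound. We prove that at initialization and after each complete prefix expansion, every optimal complete architecture $(g^*,a^*)$ either has score at most $\mathrm{LB}$ or is \emph{represented} in $\mathcal Q$. Represented means some valid $\pi\in\mathcal Q$, within the same $g^*$ and at some depth $t$, satisfies $U(\pi)\ge S^*$. The lower inequality $\mathrm{LB}=S_{\widehat g}(\widehat a)\le S^*$ is immediate. \textsc{Initialize} and \textsc{UpdateBest} only ever record budget-feasible complete paths, with exactly computed scores under the exact-arithmetic hypothesis. So the incumbent is always a feasible point of Equation~\eqref{eq:searchproblem}, and $\mathrm{LB}$ never decreases. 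Nonemptiness of the feasible set ensures \textsc{Initialize} finds an incumbent, so $\mathrm{LB}>-\infty$.

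For the upper bound, we use a stronger form of the invariant. Let $(g^*,a^*)$ be optimal, and suppose $S^*>\mathrm{LB}$ (otherwise $S^*\le\mathrm{UB}$ holds trivially). We claim the valid queue then contains a prefix $\pi=(g^*,\mathbf n,c,s)$ and a feasible suffix $\sigma$ such that $\pi\circ\sigma$ is complete, feasible, and has score at least $S^*$, hence exactly $S^*$. Proposition~\ref{prop:completion-bound} then gives $U(\pi)\ge S^*$, so $S^*\le\mathrm{UB}$.

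The invariant is checked step by step.
\begin{itemize}
\item \emph{Initialization.} Roots are inserted for every feasible specification, and the empty prefix extends to $a^*$ with $U\ge S^*>\mathrm{LB}$. It is therefore not pruned by the bound test.
\item \emph{Expansion of some other label.} The witness is unaffected, except through domination or pruning; these cases are treated below.
\item \emph{Expansion of the witness $\pi$ itself.} Consider the child $\pi'$ obtained by appending the first choice $(d,f)$ of $\sigma$. It is feasible, because the remaining suffix is affordable. It has $U(\pi')\ge S^*$. If it is complete, \textsc{UpdateBest} raises $\mathrm{LB}$ to $S^*$ and we are done. Otherwise it becomes the new witness, unless it is dominated or bound-pruned.
\item \emph{Bound pruning.} $U(\pi')\le\mathrm{LB}$ would force $S^*\le\mathrm{LB}$, again done.
\end{itemize}

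The main obstacle is domination, including labels marked stale by \textsc{Insert} and removed by \textsc{Prune}. Here we invoke Proposition~\ref{prop:sufficiency}. If the witness $\pi'=(g^*,\mathbf n',c',s')$ is rejected or displaced, this is due to a label $\pi''=(g^*,\mathbf n',c'',s'')$ with $c''\le c'$ and $s''\ge s'$. Then the same suffix $\sigma'$ is affordable from $\pi''$, and it yields an identical additional score because the count vectors agree. So $\pi''\circ\sigma'$ is feasible with score at least $S^*$. Hence $\pi''$ is a valid replacement witness.

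Two details need care. First, ties: equal $(c,s)$ labels must not both be discarded. We assume the frontier rule keeps one representative, as \textsc{Dominated}/\textsc{Insert} maintain a frontier. Second, the dominating label may itself later be displaced. The argument then iterates, and it terminates because each displacement is by a label in the finite frontier at the same $(g,\mathbf n)$ that weakly improves $(c,s)$.

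Finally, the conclusions follow directly. If $\mathrm{UB}-\mathrm{LB}\le\epsilon$, then $S^*-\mathrm{LB}\le\mathrm{UB}-\mathrm{LB}\le\epsilon$. If the valid queue is exhausted, then $\mathrm{UB}=\mathrm{LB}$, so the incumbent is optimal. Termination follows from finiteness: the state graph is acyclic, each transition increases $t$, and the menus are finite. At an earlier stop, the limits are checked only between complete expansions, so the invariant holds there and $\mathrm{UB}-\mathrm{LB}$ bounds the gap.
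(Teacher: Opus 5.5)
Your overall strategy matches the paper's: an incumbent lower bound, plus an invariant that a valid queue label covers an optimal completion, maintained through expansion, bound pruning and dominance via Propositions~\ref{prop:sufficiency} and~\ref{prop:completion-bound}. The gap is in the dominance step.

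When the witness's child $\pi'$ is rejected by \textsc{Dominated}, the dominating label $\pi''$ is an entry of the frontier at $(g^*,\mathbf n')$. Under best-first expansion it may already have been popped and expanded. The frontier is not cleared on expansion, which is exactly what lets it block later duplicates, and nothing forces $\pi''$ to be popped after the parent of $\pi'$. In that case $\pi''\notin\mathcal Q$, so it is not a valid replacement witness. Your iteration remark covers only the case where $\pi''$ is displaced later, not the case where it was expanded earlier. There you must show that the earlier expansion of $\pi''$ already handled the suffix $\sigma'$. A single witness chain for $a^*$ says nothing about $\pi''$'s descendants, so your invariant cannot be re-established from itself.

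The repair is to quantify over every label ever admitted to a frontier. Claim: at each time, for every admitted label $\kappa$ at depth $t$ and every feasible suffix $\sigma$ with $S(\kappa\circ\sigma)>\mathrm{LB}$, some valid label in $\mathcal Q$ has a feasible completion scoring at least $S(\kappa\circ\sigma)$. Prove this by backward induction on $t$.
\begin{itemize}
\item Replace $\kappa$ by a non-stale frontier label $\kappa^\dagger$ at the same state that dominates it. It exists because dominance is transitive and displacement chains are finite.
\item If $\kappa^\dagger$ is live, you are done.
\item If $\kappa^\dagger$ was removed by incumbent pruning, Propositions~\ref{prop:sufficiency} and~\ref{prop:completion-bound} give $S(\kappa\circ\sigma)\le S(\kappa^\dagger\circ\sigma)\le U(\kappa^\dagger)\le\mathrm{LB}$, a contradiction.
\item If $\kappa^\dagger$ was expanded, its child along $\sigma_1$ passed \textsc{Feasible}. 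It was then either completed, which raised $\mathrm{LB}$ to at least its score (a contradiction); or bound-pruned (a contradiction); or admitted; or rejected by an admitted depth-$(t+1)$ dominator. In the last two cases the induction hypothesis applies with suffix $\sigma_{2:}$, transferred by Proposition~\ref{prop:sufficiency}.
\end{itemize}
Applying the claim to the roots gives $S^*\le\mathrm{UB}$, and the rest of your argument then goes through. The paper's sketch states coverage for all potentially improving completions at once rather than for one witness, which is the collective form this induction establishes.
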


\begin{proof}
The feasible incumbent supplies a lower bound. Initially every feasible architecture has a prefix at one of the feasible roots. On expanding a valid prefix, all possible next choices are considered. Budget pruning removes only paths that cannot have a feasible completion. By Proposition~\ref{prop:sufficiency}, a dominated prefix can be replaced by one with no greater cost, at least as much earned score, and identical continuation values. By Proposition~\ref{prop:completion-bound}, a prefix with upper bound at most the incumbent cannot improve that incumbent.

After each complete expansion, every potentially improving feasible completion is therefore represented by a valid live prefix, or by a dominating prefix with at least as good a completion; completed paths have already updated the incumbent. The maximum live bound together with the incumbent bounds the optimum. The finite search tree is exhausted if no valid improving prefix remains, at which point $\mathrm{UB}=\mathrm{LB}$. Otherwise the gap bounds the incumbent's suboptimality, proving the stopping claim.
\end{proof}

\paragraph{Numerical objective and reporting.}
We evaluate the spectral objective in float64, with a small upward margin on computed completion bounds and absolute stopping tolerance $10^{-6}$. We report the absolute gap $\mathrm{UB}-\mathrm{LB}$ and relative gap $(\mathrm{UB}-\mathrm{LB})/\max(1,|\mathrm{LB}|)$; the search terminates when the valid queue is exhausted or the numerical gap reaches this tolerance. Theorem~\ref{thm:search} assumes exact arithmetic and admissible bounds.

\FloatBarrier
\subsection{Fixed-residual-width dynamic programming}
\label{app:fixed-width}

When $d_\ell=d$ for every layer, the ledger before each block is determined by depth independently of the FFN choices. In the full-width case $d=E$, $p_{2\ell-2,i}=2\ell-1$ on the active window, and the two scalar precisions follow Appendix~\ref{app:scalar-reference}. The block value remains $S_L^{\mathrm{block}}(p_{2\ell-2};d,f)$.

Let $D_\ell(B')$ be the highest score of the first $\ell$ blocks spending at most the integer variable budget $B'$. Fix one specification $g$ and write $L=L_g$. Then
\begin{equation}
 \begin{aligned}
 D_\ell(B')=\max_{\substack{f:(d,f)\in\mathcal A_g\\c_{g,\ell}(d,f)\le B'}}
 \bigl\{D_{\ell-1}(B'-c_{g,\ell}(d,f))
       +S_L^{\mathrm{block}}(p_{2\ell-2};d,f)\bigr\},\\
 D_0(B')=0\quad(B'\ge0),\qquad D_\ell(B')=-\infty\quad(B'<0).
 \end{aligned}
 \label{eq:fixed-width-dp}
\end{equation}
The maximum over an empty feasible set is $-\infty$. This multiple-choice knapsack recurrence is exact for the evaluated score and stated costs. The ledger remains depth-determined, preserving residual-context weighting and propagated write sources while making each layer's value independent of earlier FFN choices.

\subsection{Joint-width search results}
\label{app:search-measurements}

\paragraph{Joint-width search.}
On the complete 121M and 374M menus specified in Appendix~\ref{app:widths}, ResNSC-DP returns the reported allocations with zero final numerical objective gap (Table~\ref{tab:search-audit}). Both searches exhaust the valid queue.

\begin{table}[!htbp]
\centering
\caption{Final objectives and numerical gaps for joint residual/FFN-width search. Both searches exhaust the valid queue.}
\label{tab:search-audit}
\small\setlength{\tabcolsep}{4pt}
\begin{tabular}{@{}lrr@{}}
\toprule
Instance & Objective $S(\widehat a)$ & Rel. gap \\
\midrule
121M joint & 59,330.78 & 0 \\
374M joint & 133,930.49 & 0 \\
\bottomrule
\end{tabular}
\end{table}

\section{Training and evaluation statistics}
\label{app:training-statistics}

\paragraph{Data and training budget.}
Pretraining uses FineWeb-Edu \citep{penedo2024fineweb}, a 32,000-token Llama-2 tokenizer, and packed sequences of length 4,096. The fixed corpus subsets contain approximately 8.34B available tokens at 121M and 37.45B at 374M. The smaller subset uses the first ten of the larger subset's 45 source shards. Each subset is split by document index into 99.5\% training and 0.5\% validation. All architectures at a given scale use the same subset, split, and token budget.

The global batch contains 128 sequences, or 524,288 tokens. We set the main-phase budget to approximately 20 tokens per parameter using the uniform model's instantiated parameter count, following the budget convention motivated by compute-optimal training \citep{hoffmann2022chinchilla}. The main phase contains 4,616 updates at 121M and 14,267 at 374M, corresponding to approximately 2.42B and 7.48B tokens. A final linear-decay phase adds 462 and 1,427 updates. Total training therefore uses 2.662B and 8.228B tokens, or approximately 22 tokens per parameter.

\paragraph{Optimization.}
All models use AdamW \citep{loshchilov2019adamw} with base learning rate 0.01, $(\beta_1,\beta_2)=(0.9,0.95)$, weight decay 0.1, and numerical stabilizer $10^{-10}$. Initialization and parameter-group scaling follow the shared $\mu$P recipe \citep{yang2021tensorprogramsv}: initialization scale 0.1, width multiplier 8, embedding multiplier 12, and residual multiplier 0.22. Attention logits are scaled by the inverse head dimension. These are the training conventions; the reference variance rules defining the spectral score are specified separately in Section~\ref{sec:projections}.

The schedule uses linear warmup for the first 8\% of the main phase, followed by a power-law learning-rate schedule with coefficient 512 and exponent $-0.566$ as a function of consumed tokens. The final decay linearly reduces the ending main-phase learning rate to zero over approximately 10\% additional updates. Training uses eight A100-40GB GPUs, BF16, FSDP2\footnote{\url{https://docs.pytorch.org/docs/2.8/distributed.fsdp.fully_shard.html}}, and FlashAttention-2 \citep{dao2024flashattention2}.

\paragraph{Training seeds and aggregation.}
At 121M, every arm in Table~\ref{tab:rsc-pretraining} has three independent training runs (seeds 42, 43, 44), each starting from a fresh initialization and using its seed for data order; means and sample standard deviations are taken over these runs. At 374M and for the scoring ablations, each arm has one full run with seed 42 and two continuations of its final-decay phase (1,427 and 462 updates, respectively) that restore the seed-42 main checkpoint and regenerate the data order with seeds 43 and 44; the reported standard deviations measure annealing variation. Perplexity is exponentiated from each run's loss before averaging; Avg8 is computed within each run before aggregation. Table~\ref{tab:rsc-loss-appendix} lists the losses and Table~\ref{tab:rsc-340M-downstream} the 374M per-task accuracies.

\providecommand{\sd}[1]{{\scriptsize($\pm$#1)}}
\begin{table}[htbp]
\centering
\caption{Held-out FineWeb-Edu loss (nats) at 121M and 374M. 121M arms: three independent runs (seeds 42--44). 374M arms: one run (seed 42) plus two final-decay continuations, whose standard deviation reflects annealing variation only.}
\label{tab:rsc-loss-appendix}
\small
\setlength{\tabcolsep}{6pt}
\begin{tabular}{@{}l r r@{}}
\toprule
Method & 121M loss & 374M loss \\
\midrule
ResNSC & 2.96496\,\sd{0.00067} & 2.61025\,\sd{0.00017} \\
uniform & 3.00813\,\sd{0.00109} & 2.65818\,\sd{0.00011} \\
VWT & 2.96922\,\sd{0.00286} & 2.61367\,\sd{0.00016} \\
TLM & 3.01478\,\sd{0.00108} & 2.66964\,\sd{0.00006} \\
NSC & 2.99598\,\sd{0.00134} & 2.62640\,\sd{0.00019} \\
Reverse & 3.00905\,\sd{0.00233} & 2.67282\,\sd{0.00005} \\
\bottomrule
\end{tabular}
\end{table}

\begin{table}[htbp]
\centering
\caption{374M zero-shot accuracies (\%) after the final decay: eight tasks and their mean (Avg8), one run plus two final-decay continuations (mean and sample standard deviation). HellaSwag and ARC-c use normalized accuracy. Bold: best per column; underline: second best.}
\label{tab:rsc-340M-downstream}
\scriptsize
\setlength{\tabcolsep}{2pt}
\begin{tabularx}{\linewidth}{@{}l*{9}{>{\raggedleft\arraybackslash}X}@{}}
\toprule
Method & LMB $\uparrow$ & PIQA $\uparrow$ & HSwag $\uparrow$ & WGrnd $\uparrow$ & ARC-e $\uparrow$ & ARC-c $\uparrow$ & SIQA $\uparrow$ & BoolQ $\uparrow$ & Avg8 $\uparrow$ \\
\midrule
\rowcolor{rscblue} ResNSC (ours) & \shortstack[r]{\textbf{29.26}\\[-1pt]{\tiny $\pm$ 0.16}} & \shortstack[r]{\underline{64.89}\\[-1pt]{\tiny $\pm$ 0.33}} & \shortstack[r]{\underline{35.49}\\[-1pt]{\tiny $\pm$ 0.20}} & \shortstack[r]{50.41\\[-1pt]{\tiny $\pm$ 0.59}} & \shortstack[r]{\textbf{54.53}\\[-1pt]{\tiny $\pm$ 0.42}} & \shortstack[r]{25.06\\[-1pt]{\tiny $\pm$ 0.26}} & \shortstack[r]{\underline{37.17}\\[-1pt]{\tiny $\pm$ 0.16}} & \shortstack[r]{59.97\\[-1pt]{\tiny $\pm$ 0.96}} & \shortstack[r]{\textbf{44.60}\\[-1pt]{\tiny $\pm$ 0.07}} \\[2pt]
uniform & \shortstack[r]{26.69\\[-1pt]{\tiny $\pm$ 0.26}} & \shortstack[r]{64.56\\[-1pt]{\tiny $\pm$ 0.22}} & \shortstack[r]{34.16\\[-1pt]{\tiny $\pm$ 0.08}} & \shortstack[r]{\textbf{52.57}\\[-1pt]{\tiny $\pm$ 0.21}} & \shortstack[r]{52.46\\[-1pt]{\tiny $\pm$ 0.40}} & \shortstack[r]{\underline{26.11}\\[-1pt]{\tiny $\pm$ 0.34}} & \shortstack[r]{\textbf{37.22}\\[-1pt]{\tiny $\pm$ 0.18}} & \shortstack[r]{58.76\\[-1pt]{\tiny $\pm$ 0.36}} & \shortstack[r]{44.07\\[-1pt]{\tiny $\pm$ 0.19}} \\[2pt]
VWT & \shortstack[r]{27.03\\[-1pt]{\tiny $\pm$ 0.32}} & \shortstack[r]{\textbf{65.38}\\[-1pt]{\tiny $\pm$ 0.35}} & \shortstack[r]{\textbf{35.94}\\[-1pt]{\tiny $\pm$ 0.30}} & \shortstack[r]{50.51\\[-1pt]{\tiny $\pm$ 0.96}} & \shortstack[r]{\underline{53.68}\\[-1pt]{\tiny $\pm$ 0.76}} & \shortstack[r]{\textbf{26.39}\\[-1pt]{\tiny $\pm$ 0.47}} & \shortstack[r]{36.51\\[-1pt]{\tiny $\pm$ 0.34}} & \shortstack[r]{58.66\\[-1pt]{\tiny $\pm$ 0.55}} & \shortstack[r]{44.26\\[-1pt]{\tiny $\pm$ 0.11}} \\[2pt]
TLM & \shortstack[r]{26.24\\[-1pt]{\tiny $\pm$ 0.16}} & \shortstack[r]{63.13\\[-1pt]{\tiny $\pm$ 0.36}} & \shortstack[r]{33.87\\[-1pt]{\tiny $\pm$ 0.06}} & \shortstack[r]{50.75\\[-1pt]{\tiny $\pm$ 0.36}} & \shortstack[r]{52.57\\[-1pt]{\tiny $\pm$ 0.51}} & \shortstack[r]{25.63\\[-1pt]{\tiny $\pm$ 0.34}} & \shortstack[r]{36.18\\[-1pt]{\tiny $\pm$ 0.22}} & \shortstack[r]{\textbf{61.46}\\[-1pt]{\tiny $\pm$ 0.23}} & \shortstack[r]{43.73\\[-1pt]{\tiny $\pm$ 0.05}} \\[2pt]
NSC & \shortstack[r]{\underline{27.74}\\[-1pt]{\tiny $\pm$ 0.36}} & \shortstack[r]{64.42\\[-1pt]{\tiny $\pm$ 0.14}} & \shortstack[r]{35.27\\[-1pt]{\tiny $\pm$ 0.14}} & \shortstack[r]{50.33\\[-1pt]{\tiny $\pm$ 0.79}} & \shortstack[r]{53.04\\[-1pt]{\tiny $\pm$ 0.21}} & \shortstack[r]{25.74\\[-1pt]{\tiny $\pm$ 0.27}} & \shortstack[r]{36.51\\[-1pt]{\tiny $\pm$ 0.38}} & \shortstack[r]{\underline{61.07}\\[-1pt]{\tiny $\pm$ 0.27}} & \shortstack[r]{\underline{44.26}\\[-1pt]{\tiny $\pm$ 0.09}} \\[2pt]
Reverse & \shortstack[r]{25.76\\[-1pt]{\tiny $\pm$ 0.21}} & \shortstack[r]{64.16\\[-1pt]{\tiny $\pm$ 0.45}} & \shortstack[r]{33.62\\[-1pt]{\tiny $\pm$ 0.06}} & \shortstack[r]{\underline{52.07}\\[-1pt]{\tiny $\pm$ 0.84}} & \shortstack[r]{53.31\\[-1pt]{\tiny $\pm$ 0.48}} & \shortstack[r]{25.26\\[-1pt]{\tiny $\pm$ 0.09}} & \shortstack[r]{35.55\\[-1pt]{\tiny $\pm$ 0.06}} & \shortstack[r]{53.12\\[-1pt]{\tiny $\pm$ 1.72}} & \shortstack[r]{42.86\\[-1pt]{\tiny $\pm$ 0.18}} \\[2pt]
\bottomrule
\end{tabularx}
\end{table}

\paragraph{Evaluation protocol.}
FineWeb-Edu cross-entropy is measured on the same 2,048 held-out sequences across architectures and runs, each contributing 4,096 next-token targets. We use \texttt{lm-eval} 0.4.13 \citep{gao2023evalharness} with BF16 models, the shared Llama-2 tokenizer, a maximum input length of 4,096, and zero-shot task templates. Each task uses its complete evaluation split (Table~\ref{tab:eval-protocol}).

At both scales, we report WikiText-2 word perplexity and LAMBADA final-word perplexity. WikiText-2 uses rolling likelihood with word-count normalization; LAMBADA scores the final word conditioned on its preceding text. At 374M, the eight accuracy tasks are LAMBADA, PIQA, HellaSwag, WinoGrande, ARC-e, ARC-c, SocialIQA, and BoolQ. HellaSwag and ARC-c use normalized accuracy; the other tasks use accuracy. Avg8 is their arithmetic mean within each run.

\begin{table}[!htbp]
\centering
\caption{Pretraining-model evaluation datasets, splits, and metrics. All tasks use zero-shot templates and the complete indicated split.}
\label{tab:eval-protocol}
\small\setlength{\tabcolsep}{4pt}\renewcommand{\arraystretch}{1.12}
\begin{tabularx}{\linewidth}{@{}l>{\raggedright\arraybackslash}Xll@{}}
\toprule
Task ID & Dataset & Split & Reported metric \\
\midrule
\texttt{wikitext} & WikiText-2 \citep{merity2017wikitext} & test & word PPL \\
\texttt{lambada\_openai} & LAMBADA \citep{paperno2016lambada} & test & PPL, accuracy \\
\texttt{piqa} & PIQA \citep{bisk2020piqa} & validation & accuracy \\
\texttt{hellaswag} & HellaSwag \citep{zellers2019hellaswag} & validation & normalized accuracy \\
\texttt{winogrande} & WinoGrande (XL) \citep{sakaguchi2020winogrande} & validation & accuracy \\
\texttt{arc\_easy} & ARC-Easy \citep{clark2018arc} & test & accuracy \\
\texttt{arc\_challenge} & ARC-Challenge \citep{clark2018arc} & test & normalized accuracy \\
\texttt{social\_iqa} & SocialIQA \citep{sap2019socialiqa} & validation & accuracy \\
\texttt{boolq} & BoolQ \citep{clark2019boolq} & validation & accuracy \\
\bottomrule
\end{tabularx}
\end{table}

\subsection{Scoring ablation protocol}
\label{app:scoring-ablations}

Each scoring variant in Table~\ref{tab:rsc-component-ablation} selects an architecture from the same 121M width menus under the same parameter ceiling, then uses the training and evaluation protocols above. We test three simplifications of residual history. \emph{Without residual context} uses isotropic normalized source powers and identity read and receiver metrics, while retaining within-branch propagation. \emph{Depth only} sets every active coordinate's reference power to $p_{b,i}=b+1$, where $b$ indexes residual branches, replacing width history with branch position. \emph{Mean history only} uses the average reference power over the current active coordinates when evaluating each branch, retaining mean power while removing coordinate-wise differences.

To test propagated features, \emph{without propagated spectrum} replaces the write-source covariance $\Sigma_{x,b}^{\mathrm{write}}$ with the appropriately sized identity for both attention output and FFN down projections, preserving expected reference power while removing the propagated spectrum. Read scoring and the residual-dependent receiver covariance remain unchanged.

\section{Layer-wise architecture configurations}
\label{app:widths}

\paragraph{Architecture and allocation rules.}
Each Transformer block \citep{vaswani2017attention} contains pre-RMSNorm \citep{zhang2019rmsnorm} causal self-attention followed by a pre-RMSNorm SwiGLU FFN \citep{shazeer2020glu}. Attention uses rotary position embeddings \citep{su2021roformer} and 16 query and 16 key/value heads, with head dimension $d_\ell/16$. The FFN has gate and up projections from $d_\ell$ to $f_\ell$ and a down projection back to $d_\ell$. Linear projections are bias-free, and input embeddings and vocabulary readout are untied. Inactive residual coordinates retain their values along the identity path without additional parameters. The embedding/readout width is fixed at $E=640$ for 121M and $E=1024$ for 374M. Final RMSNorm acts on the last active width $d_L$, before the state is restored or cropped to readout width $E$.

We compare six allocation rules. \emph{Uniform} uses $d_\ell=E$ and $f_\ell=2.75E$. \emph{VWT} uses the default bottleneck-location and width-ratio settings $(0.75,0.3)$, with $f_\ell=4d_\ell$. \emph{TLM} holds residual width fixed and applies a cosine FFN taper from 1.5 to 0.5 times the uniform FFN width \citep{bayat2026tapered}. \emph{ResNSC} jointly selects active residual and FFN widths. \emph{NSC} maximizes the original NSC score over the same residual-width and FFN-width menu under the same parameter ceiling. \emph{Reverse} reverses the ResNSC sequence of $(d_\ell,f_\ell)$ pairs, preserving the projection-shape inventory used by the score.

\paragraph{Search menus.}
At each scale, the active-width menu is $d_\ell/E\in\{0.25,0.50,\ldots,3.00\}$. The FFN-ratio menu is $\{1.375,2.0625,2.75,3.4375,4.125,4.8125\}$ at 121M and $\{2.0625,2.75,3.4375,4.125,4.8125\}$ at 374M. FFN widths are rounded to multiples of 32. We search each scale independently under the uniform model's parameter ceiling, counting embeddings, readout, and normalization as well as projections. Each layer chooses its own active width and FFN ratio; the feasible set imposes no monotonicity constraint or prescribed taper.

Tables~\ref{tab:rsc-layers-121M} and~\ref{tab:rsc-layers-340M} give each model's exact $(d_\ell,f_\ell)$ sequence, ordered from input to output. These are active computation widths; the persistent residual state follows Equation~\eqref{eq:residualupdate}.

\paragraph{Parameter counts.}
Table~\ref{tab:parameter-count-audit} reports total parameters, including embeddings, readout, and normalization, using $2VE+\sum_\ell(4d_\ell^2+3d_\ell f_\ell+2d_\ell)+d_L$ with $V=32{,}000$. The last-active-width RMSNorm accounts for the ResNSC/Reverse differences of 640 parameters at 121M and 1,280 at 374M. The 121M VWT configuration exceeds the Uniform search ceiling by 12,960 parameters.

\begin{table}[!htbp]
\centering
\caption{Instantiated parameter counts for the pretraining configurations, including embeddings, readout, and normalization. Column headings are the scale labels used in the paper.}
\label{tab:parameter-count-audit}
\small
\setlength{\tabcolsep}{12pt}
\begin{tabular}{@{}lrr@{}}
\toprule
Method & 121M & 374M \\
\midrule
ResNSC & 121,188,640 & 373,799,936 \\
Uniform & 121,262,720 & 373,867,520 \\
VWT & 121,275,680 & 373,859,200 \\
TLM & 121,262,720 & 373,867,520 \\
NSC & 120,898,560 & 373,847,040 \\
Reverse & 121,189,280 & 373,801,216 \\
\bottomrule
\end{tabular}
\end{table}

\begin{table}[!htbp]
\centering
\caption{Exact 121M per-layer widths in input-to-output order: residual width $d_\ell$ and FFN intermediate width $f_\ell$.}
\label{tab:rsc-layers-121M}
\footnotesize
\setlength{\tabcolsep}{2pt}
\renewcommand{\arraystretch}{1.08}
\begin{tabularx}{\linewidth}{@{}r*{12}{>{\raggedleft\arraybackslash}X}@{}}
\toprule
Layer & \multicolumn{2}{c}{ResNSC} & \multicolumn{2}{c}{uniform} & \multicolumn{2}{c}{VWT} & \multicolumn{2}{c}{TLM} & \multicolumn{2}{c}{NSC} & \multicolumn{2}{c}{Reverse} \\
\cmidrule(lr){2-3}\cmidrule(lr){4-5}\cmidrule(lr){6-7}\cmidrule(lr){8-9}\cmidrule(lr){10-11}\cmidrule(lr){12-13}
 & $d$ & $f$ & $d$ & $f$ & $d$ & $f$ & $d$ & $f$ & $d$ & $f$ & $d$ & $f$ \\
\midrule
1 & 1120 & 1536 & 640 & 1760 & 864 & 3456 & 640 & 2624 & 640 & 1312 & 480 & 672 \\
2 & 1280 & 1760 & 640 & 1760 & 768 & 3072 & 640 & 2624 & 800 & 1088 & 480 & 672 \\
3 & 1280 & 1760 & 640 & 1760 & 704 & 2816 & 640 & 2560 & 800 & 1088 & 480 & 672 \\
4 & 960 & 1312 & 640 & 1760 & 608 & 2432 & 640 & 2464 & 800 & 1088 & 480 & 672 \\
5 & 960 & 1312 & 640 & 1760 & 544 & 2176 & 640 & 2336 & 800 & 1088 & 480 & 672 \\
6 & 800 & 1088 & 640 & 1760 & 512 & 2048 & 640 & 2208 & 800 & 1088 & 480 & 672 \\
7 & 640 & 896 & 640 & 1760 & 448 & 1792 & 640 & 2016 & 800 & 1088 & 480 & 672 \\
8 & 640 & 896 & 640 & 1760 & 416 & 1664 & 640 & 1856 & 800 & 1088 & 640 & 896 \\
9 & 640 & 896 & 640 & 1760 & 352 & 1408 & 640 & 1664 & 800 & 1088 & 640 & 896 \\
10 & 480 & 672 & 640 & 1760 & 320 & 1280 & 640 & 1504 & 800 & 1088 & 640 & 896 \\
11 & 480 & 672 & 640 & 1760 & 288 & 1152 & 640 & 1312 & 800 & 1088 & 800 & 1088 \\
12 & 480 & 672 & 640 & 1760 & 256 & 1024 & 640 & 1184 & 800 & 1088 & 960 & 1312 \\
13 & 480 & 672 & 640 & 1760 & 352 & 1408 & 640 & 1056 & 800 & 1088 & 960 & 1312 \\
14 & 480 & 672 & 640 & 1760 & 480 & 1920 & 640 & 960 & 800 & 1088 & 1280 & 1760 \\
15 & 480 & 672 & 640 & 1760 & 640 & 2560 & 640 & 896 & 800 & 1088 & 1280 & 1760 \\
16 & 480 & 672 & 640 & 1760 & 864 & 3456 & 640 & 896 & 640 & 896 & 1120 & 1536 \\
\bottomrule
\end{tabularx}
\end{table}

\begin{table}[!htbp]
\centering
\caption{Exact 374M per-layer widths in input-to-output order: residual width $d_\ell$ and FFN intermediate width $f_\ell$.}
\label{tab:rsc-layers-340M}
\footnotesize
\setlength{\tabcolsep}{2pt}
\renewcommand{\arraystretch}{1.08}
\begin{tabularx}{\linewidth}{@{}r*{12}{>{\raggedleft\arraybackslash}X}@{}}
\toprule
Layer & \multicolumn{2}{c}{ResNSC} & \multicolumn{2}{c}{uniform} & \multicolumn{2}{c}{VWT} & \multicolumn{2}{c}{TLM} & \multicolumn{2}{c}{NSC} & \multicolumn{2}{c}{Reverse} \\
\cmidrule(lr){2-3}\cmidrule(lr){4-5}\cmidrule(lr){6-7}\cmidrule(lr){8-9}\cmidrule(lr){10-11}\cmidrule(lr){12-13}
 & $d$ & $f$ & $d$ & $f$ & $d$ & $f$ & $d$ & $f$ & $d$ & $f$ & $d$ & $f$ \\
\midrule
1 & 1792 & 3712 & 1024 & 2816 & 1408 & 5632 & 1024 & 4224 & 1280 & 3520 & 512 & 1056 \\
2 & 2048 & 4224 & 1024 & 2816 & 1312 & 5248 & 1024 & 4224 & 1280 & 2656 & 512 & 1056 \\
3 & 2048 & 4224 & 1024 & 2816 & 1216 & 4864 & 1024 & 4160 & 1280 & 2656 & 512 & 1056 \\
4 & 2048 & 4224 & 1024 & 2816 & 1152 & 4608 & 1024 & 4096 & 1280 & 2656 & 512 & 1056 \\
5 & 1536 & 3168 & 1024 & 2816 & 1056 & 4224 & 1024 & 4032 & 1280 & 2656 & 512 & 1056 \\
6 & 1280 & 2656 & 1024 & 2816 & 992 & 3968 & 1024 & 3904 & 1280 & 2656 & 512 & 1056 \\
7 & 1280 & 2656 & 1024 & 2816 & 928 & 3712 & 1024 & 3776 & 1280 & 2656 & 512 & 1056 \\
8 & 1024 & 2112 & 1024 & 2816 & 864 & 3456 & 1024 & 3616 & 1280 & 2656 & 512 & 1056 \\
9 & 1024 & 2112 & 1024 & 2816 & 800 & 3200 & 1024 & 3456 & 1024 & 2112 & 768 & 1600 \\
10 & 1024 & 2112 & 1024 & 2816 & 736 & 2944 & 1024 & 3296 & 1024 & 2112 & 768 & 1600 \\
11 & 768 & 1600 & 1024 & 2816 & 704 & 2816 & 1024 & 3104 & 1024 & 2112 & 768 & 1600 \\
12 & 768 & 1600 & 1024 & 2816 & 640 & 2560 & 1024 & 2912 & 1024 & 2112 & 768 & 1600 \\
13 & 768 & 1600 & 1024 & 2816 & 608 & 2432 & 1024 & 2720 & 1024 & 2112 & 768 & 1600 \\
14 & 768 & 1600 & 1024 & 2816 & 544 & 2176 & 1024 & 2528 & 1024 & 2112 & 768 & 1600 \\
15 & 768 & 1600 & 1024 & 2816 & 512 & 2048 & 1024 & 2336 & 1024 & 2112 & 1024 & 2112 \\
16 & 768 & 1600 & 1024 & 2816 & 480 & 1920 & 1024 & 2176 & 1024 & 2112 & 1024 & 2112 \\
17 & 512 & 1056 & 1024 & 2816 & 448 & 1792 & 1024 & 2016 & 1024 & 2112 & 1024 & 2112 \\
18 & 512 & 1056 & 1024 & 2816 & 416 & 1664 & 1024 & 1856 & 1024 & 2112 & 1280 & 2656 \\
19 & 512 & 1056 & 1024 & 2816 & 512 & 2048 & 1024 & 1728 & 1024 & 2112 & 1280 & 2656 \\
20 & 512 & 1056 & 1024 & 2816 & 640 & 2560 & 1024 & 1600 & 1024 & 2112 & 1536 & 3168 \\
21 & 512 & 1056 & 1024 & 2816 & 768 & 3072 & 1024 & 1536 & 1024 & 2112 & 2048 & 4224 \\
22 & 512 & 1056 & 1024 & 2816 & 928 & 3712 & 1024 & 1472 & 1024 & 2112 & 2048 & 4224 \\
23 & 512 & 1056 & 1024 & 2816 & 1152 & 4608 & 1024 & 1408 & 1024 & 2112 & 2048 & 4224 \\
24 & 512 & 1056 & 1024 & 2816 & 1408 & 5632 & 1024 & 1408 & 1024 & 2112 & 1792 & 3712 \\
\bottomrule
\end{tabularx}
\end{table}

\section{LoNAS-LLaMA protocol and budget results}
\label{app:lonas}

For the supernet in Section~\ref{sec:lonas}, each block chooses an FFN width from $\{5504,6880,8256,9632,11008\}$ and a Q/K/V LoRA rank from $\{28,32\}$ \citep{hu2022lora}; O and FFN adapters retain rank 32. Both methods select rank 32 throughout. Table~\ref{tab:lonas-pareto} reports eight matched parameter budgets and the eight benchmark accuracies averaged into Avg8, including OpenBookQA \citep{mihaylov2018openbookqa}; Table~\ref{tab:lonas-configs} gives the configurations. Reverse reverses FFN-width assignments across the original pretrained layers. Labels A--G match Figure~\ref{fig:lonas-pareto}. The archived proxy searches use 50 individuals over 20 generations (1,000 evaluations) under a 5.7B ceiling \citep{zhu2026nsc}.

\paragraph{LoRA scoring.}
We add the contextual scores of the three Q/K/V adapter paths to the seven-projection score. Each path has $A\in\mathbb R^{r\times d}$ and $B\in\mathbb R^{d\times r}$, with entry variance $2/(d+r)$ and unit noise. Their source covariances are $\Sigma_{x,b}^{\mathrm{read}}$ for $A$ and $A^{\mathrm{ref}}\Sigma_{x,b}^{\mathrm{read}}(A^{\mathrm{ref}})^\top$ for $B$, where independent $A^{\mathrm{ref}}$ has entry variance $1/d$. These terms use the spectral evaluations in Appendix~\ref{app:spectral}, leaving the residual ledger unchanged. At fixed residual width they are independent of FFN width; rank adds no dense deployment cost after merging. Rank 32 scores higher at every layer on this menu.

\paragraph{Search cost and FLOPs.}
The 0.28\,s search time is the median of five fresh-process runs on one CPU thread, covering spectral scoring and joint FFN/rank selection at all eight budgets. NSC and proxy timings follow \citet{zhu2026nsc}. Figure~\ref{fig:lonas-pareto} counts dense forward FLOPs at sequence length 256, with two FLOPs per multiply-add, including projections, attention products, FFNs, and vocabulary readout. Matched ResNSC/NSC configurations and the highest-budget Reverse control have identical FLOPs.

\begin{table}[!htbp]
\centering
\caption{Complete LoNAS results with the shared FFN menu $\{5504,6880,8256,9632,11008\}$. ResNSC and NSC have identical parameter counts at each budget. Labels match Figure~\ref{fig:lonas-pareto}; ``best'' denotes the highest-budget point. $\Delta$ is the Avg$_8$ difference from the corresponding NSC model (percentage points).}
\label{tab:lonas-pareto}
\setlength{\tabcolsep}{3pt}
\resizebox{\linewidth}{!}{%
\begin{tabular}{@{}l r *{9}{r} r@{}}
\toprule
Method & Params & BoolQ & PIQA & SIQA & HSwag & WGrnd & ARC-e & ARC-c & OBQA & Avg$_8$ & $\Delta$ \\
\midrule
NSC best & 5.690\,B & 64.01 & 72.85 & 68.94 & 51.30 & 64.72 & 72.43 & 58.70 & 70.80 & 65.47 & --- \\
NSC A & 5.572\,B & 64.01 & 72.42 & 68.17 & 51.26 & 64.09 & 72.52 & 58.53 & 70.60 & 65.20 & --- \\
NSC B & 5.538\,B & 63.73 & 72.42 & 68.27 & 51.05 & 63.46 & 72.18 & 58.79 & 71.20 & 65.14 & --- \\
NSC C & 5.487\,B & 63.61 & 72.25 & 67.50 & 50.69 & 63.38 & 72.18 & 58.87 & 72.00 & 65.06 & --- \\
NSC D & 5.453\,B & 63.73 & 72.03 & 66.73 & 49.95 & 63.85 & 71.34 & 58.19 & 70.80 & 64.58 & --- \\
NSC E & 5.420\,B & 63.15 & 71.49 & 67.71 & 49.73 & 63.69 & 71.30 & 57.08 & 71.00 & 64.39 & --- \\
NSC F & 5.369\,B & 62.51 & 69.37 & 66.63 & 46.54 & 62.51 & 68.06 & 54.86 & 67.80 & 62.29 & --- \\
NSC G & 5.098\,B & 59.27 & 69.48 & 64.64 & 44.73 & 62.27 & 66.04 & 52.82 & 66.60 & 60.73 & --- \\
\midrule
ResNSC best & 5.690\,B & 61.71 & 76.71 & 70.98 & 58.39 & 67.01 & 78.45 & 61.69 & 76.00 & 68.87 & +3.40 \\
ResNSC A & 5.572\,B & 62.69 & 76.44 & 70.83 & 57.29 & 66.77 & 78.03 & 61.26 & 76.20 & 68.69 & +3.49 \\
ResNSC B & 5.538\,B & 63.73 & 76.55 & 70.83 & 57.08 & 66.30 & 77.99 & 61.35 & 75.80 & 68.70 & +3.56 \\
ResNSC C & 5.487\,B & 63.55 & 76.50 & 70.78 & 57.08 & 65.27 & 77.69 & 61.09 & 74.80 & 68.34 & +3.28 \\
ResNSC D & 5.453\,B & 63.55 & 76.12 & 70.37 & 57.36 & 65.19 & 77.82 & 60.15 & 74.20 & 68.09 & +3.51 \\
ResNSC E & 5.420\,B & 62.87 & 76.01 & 70.78 & 56.54 & 65.43 & 77.23 & 60.32 & 73.20 & 67.80 & +3.41 \\
ResNSC F & 5.369\,B & 64.37 & 76.01 & 70.37 & 55.82 & 65.04 & 76.98 & 59.22 & 74.20 & 67.75 & +5.46 \\
ResNSC G & 5.098\,B & 63.61 & 73.01 & 67.76 & 50.97 & 63.22 & 73.19 & 57.17 & 72.00 & 65.12 & +4.39 \\
\midrule
Reverse best & 5.690\,B & 61.25 & 67.46 & 63.10 & 42.64 & 61.17 & 65.24 & 51.45 & 63.60 & 59.49 & -5.98 \\
\bottomrule
\end{tabular}}
\end{table}

\begin{table}[!htbp]
\centering
\caption{Exact LoNAS FFN configurations for the budgets in Table~\ref{tab:lonas-pareto}. Each width column lists the layers using that FFN intermediate width; layers are numbered 1--32 from input to output, and a dash means none. All configurations have residual width 4,096; both search methods select LoRA rank 32 in every layer.}
\label{tab:lonas-configs}
\footnotesize
\setlength{\tabcolsep}{2.5pt}
\renewcommand{\arraystretch}{1.08}
\begin{tabular*}{\linewidth}{@{\extracolsep{\fill}}llrccccc@{}}
\toprule
 & & & \multicolumn{5}{c}{Layer indices by FFN width} \\
\cmidrule(lr){4-8}
Method & Budget & Params (B) & 11008 & 9632 & 8256 & 6880 & 5504 \\
\midrule
ResNSC & best & 5.690 & 1--9 & 10--13 & 14--18 & 19--26 & 27--32 \\
ResNSC & A & 5.572 & 1--8 & 9--12 & 13--16 & 17--23 & 24--32 \\
ResNSC & B & 5.538 & 1--8 & 9--11 & 12--16 & 17--22 & 23--32 \\
ResNSC & C & 5.487 & 1--7 & 8--11 & 12--15 & 16--21 & 22--32 \\
ResNSC & D & 5.453 & 1--7 & 8--10 & 11--14 & 15--21 & 22--32 \\
ResNSC & E & 5.420 & 1--6 & 7--10 & 11--14 & 15--20 & 21--32 \\
ResNSC & F & 5.369 & 1--6 & 7--9 & 10--13 & 14--19 & 20--32 \\
ResNSC & G & 5.098 & 1--3 & 4--6 & 7--9 & 10--13 & 14--32 \\
\midrule
NSC & best & 5.690 & --- & 1--2 & 3--32 & --- & --- \\
NSC & A & 5.572 & --- & --- & 1--15, 21--32 & 16--20 & --- \\
NSC & B & 5.538 & --- & --- & 1--13, 21--32 & 14--20 & --- \\
NSC & C & 5.487 & --- & --- & 1--11, 21--31 & 12--20, 32 & --- \\
NSC & D & 5.453 & --- & --- & 1--9, 21--31 & 10--20, 32 & --- \\
NSC & E & 5.420 & --- & --- & 1, 3--8, 21--31 & 2, 9--20, 32 & --- \\
NSC & F & 5.369 & --- & --- & 6--9, 21--31 & 1--5, 10--20, 32 & --- \\
NSC & G & 5.098 & --- & --- & --- & 1--31 & 32 \\
\midrule
Reverse & best & 5.690 & 24--32 & 20--23 & 15--19 & 7--14 & 1--6 \\
\bottomrule
\end{tabular*}
\end{table}

\section{Architecture-ranking benchmarks and protocol}
\label{app:ranking}
\paragraph{Evaluation sets and targets.}
We reuse the existing architecture IDs and performance labels in four benchmark sets: 500 FlexiBERT entries with widths read from each architecture configuration, 200 GPT-2 entries, and 1,001 entries each in AutoFormer-Tiny and AutoFormer-Small. Correlations use GLUE for FlexiBERT, negative validation perplexity for GPT-2 (test perplexity where the validation field is absent), and clean ImageNet accuracy for AutoFormer. All scores are computed analytically without candidate training or forward passes.
\paragraph{Architecture and variance conventions.}
FlexiBERT reads widths from each architecture configuration and retains the archived inventory: merged QKV for self-attention, two square matrices for the LT operator, one square matrix for DSC, and the archived paired-matrix decomposition for multiple FFNs. This multiple-FFN inventory follows the benchmark scoring representation; the source network implements the FFNs sequentially. GPT-2 uses the archived entry standard deviation 0.02. The remaining inventories retain the NSC shape-based reference variances. AutoFormer uses head dimension 64 and internal attention width $64H$, including a rectangular output projection. Both scores use the same architectural specifications; their Q/K/V evaluation conventions are distinguished below.
\paragraph{Residual and FFN adaptation.}
Each candidate has constant residual width across its layers, giving normalized branch-input covariance $\Sigma_{x,b}^{\mathrm{in}}=I$ and scalar depth-dependent read/write metrics. The same residual reference is applied across these archived sets, including their post-normalized models. Ordinary FFNs have one up/read projection and one propagated down/write projection, with Gaussian hidden reference and unit gate. The propagation model uses this Gaussian closure in place of a GELU covariance kernel. For GPT-2 and AutoFormer, NSC retains its headwise decomposition, while ResNSC evaluates complete Q/K/V projections with the inherited per-head entry variance and uses a normalized propagated value-to-output channel.
\begin{table}[!htbp]
\centering
\caption{Detailed ranking correlations. Parameter-count correlations retain the archived counting convention.}
\label{tab:rsc-ranking-full}
\scriptsize
\setlength{\tabcolsep}{3pt}
\begin{tabularx}{\linewidth}{@{}Xrrrrrrr@{}}
\toprule
 & \multicolumn{2}{c}{NSC} & \multicolumn{2}{c}{ResNSC} & \multicolumn{2}{c}{Params} & \\
\cmidrule(lr){2-3}\cmidrule(lr){4-5}\cmidrule(lr){6-7}
Benchmark & $\tau_b$ & $\rho$ & $\tau_b$ & $\rho$ & $\tau_b$ & $\rho$ & $\Delta\rho$ \\
\midrule
FlexiBERT & 0.6946 & 0.8843 & 0.6921 & 0.8833 & 0.4846 & 0.6763 & -0.0009 \\
GPT-2 & 0.8488 & 0.9681 & 0.8543 & 0.9698 & 0.6386 & 0.7943 & +0.0018 \\
AutoFormer-Tiny & 0.6119 & 0.8106 & 0.5717 & 0.7786 & 0.4691 & 0.6347 & -0.0320 \\
AutoFormer-Small & 0.6580 & 0.8519 & 0.7192 & 0.9070 & 0.7885 & 0.9414 & +0.0551 \\
\bottomrule
\end{tabularx}
\end{table}
\paragraph{Ranking within a 10\% parameter window.}
We retain unordered architecture pairs satisfying $|P_i-P_j|/\max(P_i,P_j)<0.10$, using the same archived parameter counts, performance labels, and frozen NSC/ResNSC scores as above. For each score, let $C$ and $D$ count concordant and discordant pairs after excluding ties in either the score or the performance label. Pairwise accuracy is $C/(C+D)$ and $\tau_{\mathrm{PW}}=(C-D)/(C+D)$. The pair count in Table~\ref{tab:rsc-ranking-pw10} is measured before tie exclusion; $\tau_{\mathrm{PW}}$ uses this restricted pair set rather than the global $\tau_b$ statistic.

\begin{table}[!htbp]
\centering
\caption{Parameter-controlled pairwise ranking ($\uparrow$). Only pairs with less than 10\% relative parameter difference are compared. Accuracy is reported in percent; $\Delta$ is ResNSC minus NSC accuracy in percentage points. Bold marks the higher value within each metric and row.}
\label{tab:rsc-ranking-pw10}
\small
\setlength{\tabcolsep}{4pt}
\renewcommand{\arraystretch}{1.15}
\begin{tabular*}{\linewidth}{@{\extracolsep{\fill}}l r rr rr r@{}}
\toprule
 & & \multicolumn{2}{c}{PW accuracy (\%)} & \multicolumn{2}{c}{$\tau_{\mathrm{PW}}$} & \\
\cmidrule(lr){3-4}\cmidrule(lr){5-6}
Benchmark & Pairs & NSC & ResNSC & NSC & ResNSC & $\Delta$ (pp) \\
\midrule
FlexiBERT & 26,769 & \textbf{75.15} & 74.98 & \textbf{0.5031} & 0.4997 & -0.17 \\
GPT-2 & 1,721 & 77.28 & \textbf{78.27} & 0.5456 & \textbf{0.5654} & +0.99 \\
AutoFormer-Tiny & 372,469 & \textbf{75.32} & 72.51 & \textbf{0.5063} & 0.4503 & -2.80 \\
AutoFormer-Small & 130,265 & \textbf{67.42} & 66.16 & \textbf{0.3483} & 0.3232 & -1.25 \\
\bottomrule
\end{tabular*}
\end{table}
\noindent Using only pairs for which the performance label and both scores are non-tied gives the same direction of change on all four sets. Under this common-pair protocol, the ResNSC accuracy differences are $-0.09$, $+0.99$, $-2.72$, and $-1.25$ percentage points in table order.
\FloatBarrier

\end{document}